\def\D{\mathcal{D}}
\def\x{\mathbf{x}}
\def\y{\mathbf{y}}
\def\z{\mathbf{z}}
\def\u{\mathbf{u}}
\def\Y{\mathcal{Y}}
\def\S{\mathcal{S}}
\def\X{\mathcal{X}}
\def\T{\mathcal{T}}
\def\D{\mathcal{D}}
\title{Towards Gradient-based Bilevel Optimization with Non-convex Followers and Beyond}
\author{%
	%David S.~Hippocampus \\
	%Department of Computer Science\\
	%Cranberry-Lemon University\\
	%Pittsburgh, PA 15213 \\
	%\texttt{hippo@cs.cranberry-lemon.edu} \\
%	Risheng~Liu \\
%	DUT-RU International School of Information Science \& Engineering\\
%	Dalian University of Technology\\
%	Dalian 116024, China \\
%	\texttt{rsliu@dlut.edu.cn} \\
	Risheng~Liu$^{1,2}\quad $ Yaohua~Liu$^{1}\quad$   Shangzhi~Zeng$^{3}\quad$  Jin~Zhang~$^{\ast 4,5}$\\ 
	$^{1}$International School of Information Science \& Engineering, DUT\quad$^{2}$Pazhou Lab, Guangzhou\\
	$^{3}$Department of Mathematics and Statistics, UVic\\
	$^{4}$Department of Mathematics, SUSTech  \quad $^{5}$National Center for Applied Mathematics Shenzhen\\
	\quad\texttt{rsliu@dlut.edu.cn}\quad \texttt{liuyaohua\_918@163.com}\\
	\texttt{zengshangzhi@gmail.com}\quad \texttt{zhangj9@sustech.edu.cn}\\
	\footnotetext{Corresponding author}
	% examples of more authors
%	 \And
%	 Yaohua~Liu \\
%	 Dalian University of Technology \\
%	 Dalian 116024, China \\
%	 \texttt{{liuyaohua\_918}@163.com} \\
%	 \AND
%	 Shangzhi~Zeng \\
%	 Department of Mathematics and Statistics, \\
%	 University of Victoria\\
%	 Victoria, B.C, Canada \\
%	 \texttt{{zengsz}@connect.hku.hk} \\
%	 \And
%	 Jin~Zhang (Corresponding Author)\\
%	 Department of Mathematics and SUSTech International Center for Mathematics\\
%	 Southern University of Science and Technology \\
%	 Shenzhen, China \\
%	 \texttt{{zhangj9}@sustech.edu.cn} \\
	% \And
	% Coauthor \\
	% Affiliation \\
	% Address \\
	% \texttt{email} \\
}
\begin{document}

	\maketitle
	\newtheorem{lemma}{Lemma}[section]
	\newtheorem{thm2}{Theorem}[section]
	\newtheorem{rem}{Remark}[section]
	\newtheorem{defi}{Definition}[section]
	\newtheorem{asu}{Assumption}[section]

	\begin{abstract}
		In recent years, Bi-Level Optimization (BLO) techniques have received extensive attentions from both learning and vision communities. A variety of BLO models in complex and practical tasks are of non-convex follower structure in nature (a.k.a., without Lower-Level Convexity, LLC for short). However, this challenging class of BLOs is lack of developments on both efficient solution strategies and solid theoretical guarantees. In this work, we propose a new algorithmic framework, named Initialization Auxiliary and Pessimistic Trajectory Truncated Gradient Method (IAPTT-GM), to partially address the above issues. In particular, by introducing an auxiliary as initialization to guide the optimization dynamics and designing a pessimistic trajectory truncation operation, we construct a reliable approximate version of the original BLO in the absence of LLC hypothesis. Our theoretical investigations establish the convergence of solutions returned by IAPTT-GM towards those of the original BLO without LLC. As an additional bonus, we also theoretically justify the quality of our IAPTT-GM embedded with Nesterov's accelerated dynamics under LLC. The experimental results confirm both the convergence of our algorithm without LLC, and the theoretical findings under LLC.\let\thefootnote\relax\footnotetext{*Corresponding author}
	\end{abstract}
	
	\section{Introduction}
	Bi-Level Optimization (BLO) has been widely used to formulate problems in the field of deep learning~\cite{liu2021investigating,liu2021learning}, especially for hyperparameter optimization~\cite{franceschi2017forward,okuno2018hyperparameter,mackay2018self},  meta learning~\cite{franceschi2018bilevel,rajeswaran2019meta,zugner2018adversarial,liu2021boml}, neural architecture search~\cite{liu2018darts,liang2019darts+,chen2019progressive}, adversarial learning~\cite{pfau2016connecting}, and reinforcement learning~\cite{yang2019provably}, etc. BLO aims to tackle nested optimization structures appearing in applications, which has emerged as a prevailing optimization technique for modern machine learning tasks with underlying hierarchy. In the last decade, a large number of BLO methods have been proposed to address different machine learning tasks. In fact, Gradient Methods (GMs), which can effectively handle BLO problems of large scale, thus gain popularity. According to different types of strategies for gradient calculations, existing GMs can be divided into two categories, i.e., the explicit approaches which aim to replace the Lower-Level (LL) problem with dynamic iterations and implicit schemes that apply the implicit function theorem to formulate the first-order optimality condition of the LL problem.
	
	\paragraph{Explicit Gradient Methods for BLOs.} In this type, solving the LL problem is regarded as the evolution path of the dynamic system starting from a given initial point of the LL variable. The gradient of the Upper-Level (UL) variable can be directly calculated by automatic differentiation based on the trajectory of LL variable.
	This class of methods can be further divided into three types, namely, recurrence-based EG  (e.g., \cite{franceschi2017forward,maclaurin2015gradient,franceschi2018bilevel,shaban2019truncated,liu2018darts}), initialization-based EG (e.g., \cite{nichol2018first,nichol2018reptile}) and proxy-based EG methods (e.g., \cite{li2016learning,lee2018meta,park2019meta,flennerhag2019meta}), differing from each other in the way of accessing the gradient of constructed dynamic trajectory.
	While most of this type of works assume the LLC and Lower-Level Singleton (LLS) to simplify their optimization process and theoretical analysis, cases where the LLS assumption does not hold have been tackled in the recent work~\cite{liu2020generic,liu2021general}. In particular,  to eliminate the LLS assumption which is too restrctive to be satisfied in real-world complex tasks, \cite{liu2020generic} first considers incorporating UL objective information into the dynamic iterations, but
	the more general cases where LLC does not hold remain unsolved.
	On the other hand, while most of the mentioned works focus on the asymptotic convergence,  the progress on
	nonasymptotic convergence analysis has been recently witnessed see, e.g., \cite{ji2020bilevel,grazzi2020on,ji2021lower}.
	
	\paragraph{Implicit Gradient Methods for BLOs.} This type, also known as implicit differentiation \cite{pedregosa2016hyperparameter,rajeswaran2019meta,lorraine2020optimizing}, replaces the LL problem with its first-order optimality condition and uses the implicit function theorem to calculate the gradient of the UL problem by solving a linear system. This method decouples the calculation of UL gradient from the dynamic system of LL, resulting in a significant speed increase when the dynamic system iterates many times.
	However,  because of the burden originated from computing a Hessian matrix and its inverse, IG methods are usually computationally expensive when linear systems are ill-conditioned. To alleviate this computational issue, there are mainly two kinds of techniques, i.e., IG based on Linear System  (LS)~\cite{pedregosa2016hyperparameter,rajeswaran2019meta} and Neumann Series (NS)~\cite{lorraine2020optimizing}.
	On the theoretical side, IG methods rely on the strong convexity of LL problems heavily, which is even more restrictive than the LLC and LLS together.
	
	%Gradient-based methods are potentially the most efficient as they
	%exploit analytical gradients. Most of the works among existing literature focus on the asymptotic convergence,  the progress on non-asymptotic analysis has been recently witnessed, e.g., xxx.

	\paragraph{Initialization Optimization for Learning.}
	In deep learning, the selection of initialization scheme has a great influence on the training speed and performance \cite{sutskever2013importance}.  As the most representative work in recent years, Model-Agnostic Meta-Learning (MAML) \cite{finn2017model}  applies the same initialization to all tasks, and is optimized by a loss function common to the task that evaluates the effect of the initial value, resulting in an initialization that achieves good generalization performance with only a few gradient steps on new tasks. Due to its simple form  this method has been widely studied and applied \cite{zhou2019efficient,collins2020task,wang2021fast,fallah2021generalization}. \cite{raghu2019rapid} noticed that not all network parameters are suitable for the same initialization, and therefore proposed a strategy to apply co-initialization only on a part of parameters. In theory, \cite{fallah2020convergence,ji2020theoretical,NEURIPS2020_84c578f2} give comprehensive study on the convergence and convergence rate of MAML and some MAML-type approaches based on the meta objective function. However, the convergence theory of these existing results are given based on the  loss function for evaluating initial values, and the convergence analysis of such type of methods from the perspective of each task is still lacked.
	\paragraph{Value-Function Approach.} {The value function based methods have also emerged as a promising branch to solve BLO problems~\cite{liu2021valuefunctionbased}. Under the special case where the LL is jointly convex with respect to both the UL and LL variables, the BLO problem can be equivalently reformulated into a difference-of-convex program~\cite{ye2021difference}, which is numerically solvable. Typically, by reformulating the BLO into an Inner Single Bi-level (ISB) optimization problem with value-function approach, a gradient-based interior-point method name BVFIM~\cite{LiuLYZZ21} is proposed to solve the BLO tasks, which effectively avoids the expensive Hessian-vector and Jacobian-vector products. Generally speaking, the value-function does not admit an explicit form, and is always nonsmooth, non-convex and with jumps.}

	\subsection{Our Motivations and Contributions}
    \label{Our Motivations and Contributions}
	 As mentioned above, some theoretical progresses have long been
	witnessed in diversified learning areas, but for
	most existing BLO methods, extra
	restrictive assumptions (e.g., LLS, LLC and LL strong convexity) have to be enforced.
	Their algorithm design and associated theoretical analysis actually are only valid for optimization with a
	simplified problem structure.
	Unfortunately, it has been well recognized that LL non-convexity frequently appears in a variety of applications, e.g., sparse $\ell_q$ regularization $(0<q<1)$ for avoiding over-fitting, and learning parameters of coupled multi-layer neural networks, etc. Therefore, in challenging real-world scenarios,
	we are usually required to consider BLO problems where \emph{these assumptions (e.g., LLS, LLC  and even LL strong convexity) are naturally violated}. These fundamental theoretical issues motivate us to propose a series of new techniques to address BLO with non-convex LL problems, which have been frequently appeared in various learning applications.

	In particular, by introducing an Initialization Auxiliary (IA) to the LL optimization dynamics and operating a Pessimistic Trajectory Truncation (PTT) strategy during the UL approximation, we construct a Gradient-based Method (GM), named IAPTT-GM, to address BLO in challenging optimization scenarios (i.e., with non-convex follower tasks). We analyze the convergence behaviors of IAPTT-GM on BLOs without LLC and also investigate theoretical properties of the accelerated version of our algorithm on BLOs under LLC. Extensive experiments verify our theoretical results and demonstrate the effectiveness of IAPTT-GM on different learning applications. The main contributions of our IAPTT-GM are summarized as follows:
	
	\begin{itemize}
		\item We propose IA and PTT, two new mechanisms to efficiently handle complex BLOs where the follower is facing with a non-convex task (i.e., without LLS and even LLC). IA actually paves the way for  jointly optimizing both the UL variables and the dynamical initialization, while PTT adaptively reduces the complexity of backward recurrent propagation.
		
		\item To our best knowledge, we establish the first strict convergence guarantee for gradient-based method on BLOs with non-convex follower tasks. We also justify the quality of our IAPTT-GM embedded with Nesterov's accelerated dynamics under LLC.
		
		\item We conduct a series of experiments to verify our theoretical findings and evaluate IAPTT-GM on various challenging BLOs, in which the follower tasks are either with non-convex loss functions (e.g., few-shot learning) or coupled network structures (e.g., data hyper-cleaning).
	\end{itemize}
	
	\section{The Proposed Algorithmic Framework}
	In this work, we consider the BLO problem in the form:
	\begin{equation}\label{blo problem}
	\mathop{\min}\limits_{\x\in \X, \y} F(\x,\y), \quad s.t. \quad  \y \in \S(\x),
	\end{equation}
	where $\x \in \mathbb{R}^{n}, \y\in \mathbb{R}^{m}$ are UL and LL variables respectively, and $\S(\x)$ denotes the set of solutions of the LL problem, i.e.,
	\begin{equation}\label{LLP}
	\S(\x):=\arg\min_{\y \in \Y} f(\x,\y),
	\end{equation}
	where $f$ is differentiable w.r.t. $\y$. To ensure the BLO model in Eq.~\eqref{blo problem} is well-defined, we assume that $\S(\x)$ is nonempty for all $\x \in \X$.
	Observe further that the above BLO is structurally different from
	those in existing literature in the sense that no convexity assumption is required in the LL problem.
	
	In the following, we describe the proposed IAPTT-GM to solve the class of BLOs defined in Eq.~\eqref{blo problem}. The mechanism of a classical dynamics-embedded gradient method, approximates the LL solution via a dynamical system drawn from optimization iterations.	Choosing gradient descent as the optimization dynamics for example, the approximation $\y_K(\x)$ is accessed by operations repeatedly performed by $K-1$ steps  parameterized by UL variable $\x$
	\begin{equation}\y_{k+1}(\x)= \y_k(\x)-s\nabla_{\y}f(\x,\y_k(\x)) ,\ k=0,\cdots,K-1,\end{equation}
	where $s$ is a step size, and $\y_0$ is a fixed initial value.
	
	\subsection{Initialization Auxiliary} 
	Embedding the dynamical iterations into the UL problem returns the approximate version $F(\x, \y_K(\x))$. As long as $\nabla_{\y}f(\x,\y_K(\x))$ uniformly converges to zero w.r.t. UL variable $\x$ varying in $\X$, we call this a good approximation.
	To this end, usually restrictive LL strong convexity assumptions are imposed, thus
	the desired convergence of solutions of approximation problems towards those of the original BLO follows. By drawing inspiration from the classic dynamics-embedded gradient method which replaces the LL problem with certain optimization dynamics, hence resulting in an approximation of the bi-level problem, we propose a new gradient scheme to solve BLO in Eq.~\eqref{blo problem} without LLC restriction. Specifically, we let $K$ be a prescribed positive integer and construct the following approximation $\y_K(\x,\z)$ of the LL solution drawn from projected gradient descent iterations
	\begin{equation}\label{yk_def}
	\begin{aligned}
	&\y_0(\x,\z) = \z,  \\ & \y_{k+1}(\x,\z)= \mathtt{Proj}_{\Y} ( \y_k(\x,\z)- \alpha_{\y}^k\nabla_{\y}f(\x,\y_k(\x,\z)) ) ,\ k=0,\cdots,K-1,
	\end{aligned}
	\end{equation}

	where $\{\alpha_{\y}^k\}$ is a sequence of steps sizes. We next embed the dynamical iterations $\y_k(\x,\z)$ into $\max_{1\le k \le K} \left\lbrace F(\x,\y_{k}(\x,\z))\right\rbrace$, which can be regarded as a pessimistic trajectory truncation of the UL objective.
	The mechanism of the above scheme, in comparison, accesses the approximation  parameterized by UL variable $\x$ and LL initial point $\y_0$.
	Our motivation for the initialization auxiliary variable $\z$ comes from convergence theory~\cite{LiuLYZZ21} of non-convex first-order optimization methods. In fact, when the non-convex LL problem admits multiple solutions, the gradient descent steps with a ``bad'' initial point $\y_0$ cannot return a desired point in the LL solution set, simultaneously optimizing the UL objective. To overcome such a difficulty, instead of using a fixed initial value, we introduce an initialization auxiliary variable  $\z$. Therefore, when it comes to solving the UL approximation problems, together with the UL variable $\x$, the auxiliary variable $\z$ is also updated and hence optimized.  As a consequence, we may search for the ``best'' initial value, starting from which the gradient descent steps approach a solution to the BLO in Eq.~\eqref{blo problem}, i.e., a point in the LL solution set, simultaneously minimizing the UL objective.	
	\begin{wrapfigure}{r}{7cm}
		\vspace{1.2cm} %
		\begin{minipage}{1.0\linewidth}
			\begin{algorithm}[H]
				\caption{The Proposed IAPTT-GM}\label{alg:innerloop}
				\begin{algorithmic}[1]
					%\REQUIRE Current UL variable $\x^t$ .
					%\ENSURE  $\y_K(\x^t,\z^t)$
					\STATE Initialize $\x^0$ and $\z^0$.
					\FOR {$t=0 \rightarrow T-1$}
					\STATE $\y_0=\z^t$.
					\FOR {$k=0 \rightarrow K-1$}
					\STATE \% LL Updating with $\x^t$ and $\z^t$
					\STATE $\y_{k+1}=\mathtt{Proj}_{\Y} ( \y_{k}-\alpha^k_{\y} \nabla_{\y}f(\x^t,\y_{k}) )$.
					\ENDFOR
					\STATE \% Pessimistic Trajectory Truncation
					\STATE $\bar{k}=\arg\max_{k} \{F(\x,\y_{k})\}_{k=1}^K$.
					\STATE \% UL Updating with $\y_{\bar{k}}(\x,\z)$
					\STATE $\x^{t+1}=\mathtt{Proj}_{\X}(\x^{t}-\alpha_{\x}\nabla_{\x}F(\x^t,\y_{\bar{k}}))$.
					\STATE \% Initialization Updating with $\y_{\bar{k}}(\x,\z)$
					\STATE $\z^{t+1}=\mathtt{Proj}_{\Y}(\z^t-\alpha_{\z}\nabla_{\z}F(\x^{t},\y_{\bar{k}}))$.
					\ENDFOR
				\end{algorithmic}
			\end{algorithm}
		\end{minipage}
		\vspace{-1.2cm}
	\end{wrapfigure}
	\subsection{Pessimistic Trajectory Truncation}
	This is a striking feature of our algorithm that  significantly differs from
	existing methods and leads to some new convergence results without LLC. The motivation for the design of pessimistic trajectory truncation comes again from convergence theory of non-convex first-order optimization methods. It is understood that when LL is non-convex,
	$\mathcal{R}_{\alpha}(\x,\y_K(\x,\z))$
	may not uniformly converge w.r.t. $\x$ and $\z$, where $\mathcal{R}_{\alpha}(\x,\y)$ is the proximal gradient residual mapping defined as $\mathcal{R}_{\alpha}(\x,\y): = \y - \mathtt{Proj}_{\Y} \left(\y - \alpha \nabla_{\y}f(\x,\y) \right)$~\footnote {$\mathcal{R}_{\alpha}(\x,\y)$ reduces to $\alpha \nabla_{\y}f(\x,\y)$ when $\Y$ is taken as $\mathbb{R}^m$.}, which can be used as a measurement of the optimality of LL problem in Eq.~\eqref{LLP}. Thus a direct embedding of $\y_{K}(\x,\z)$ into UL objective $F(\x, \y)$ may not necessarily provide an appropriate approximation.
	Fortunately, it is also understood that, for each $\x \in \X$, $\z \in \Y$ and $K>0 $, there exists at least a $\tilde{K}$ such that along the selection $\y_{\tilde{K}} (\x, \z)$, $\mathcal{R}_{\alpha}(\x, \y_{\tilde{K}} (\x, \z))$ uniformly converges to zero w.r.t. $\x$ and $\z$, as $K$ tending infinity.
	
	However, in general, it is too ambitious to expect an explicit identification of the exact selection $\y_{\tilde{K}} (\x, \z)$. Alternatively, we consider a pessimistic strategy, minimizing the worst case of all selections of $\{\y_{k} (\x, \z)\}$, i.e., $\max_{1\le k \le K} \left\lbrace F(\x,\y_{k}(\x,\z))\right\rbrace$.
	By doing so, we successfully reach a good approximation. In addition to the theoretical convergence, we also benefit from this pessimistic strategy in a numerical sense.
	The pessimistic $\max$ operation always results in a favorable trajectory truncation smaller than $K$.
	Consequently, this technique offers inexpensive computational cost for
	computing the hyper-gradient through back propagation, as shown in the numerical experiments.
	
	To conclude this section, we state the complete IAPTT-GM in Algorithm~\ref{alg:innerloop}. Note that $K$ and $T$ represent the numbers of inner and outer iterations, respectively.

	\section{Theoretical Investigations}\label{Convergence Analysis}
	
	With the purpose of studying the convergence of dynamics-embedded gradient method for BLO without LLC, we involve two signature features in our algorithmic design, i.e., initialization auxiliary and pessimistic trajectory truncated. This section is devoted to the convergence analysis of our proposed algorithm with and without LLC assumption.
	Please notice that all the proofs of our theoretical results are stated in the Supplemental Material.
	\subsection{Convergence Analysis of IAPTT-GM for BLO with Non-convex Followers}
	In this part, we conduct the convergence analysis of the IAPTT-GM for solving BLO in Eq.~\eqref{blo problem} without LLC. Before presenting our main convergence results, we introduce some notations related to BLO. With introduced function $\varphi(\x) := \inf_{\y \in \S(\x)} F(\x,\y)$, the BLO in Eq.~\eqref{blo problem} can be rewritten as
	\begin{equation}\label{single_reform}
	\min_{\x \in \X} \varphi(\x).
	\end{equation}
	With given $K \ge 1$ and defining $\varphi_{K}(\x,\z) := \max_k \left\lbrace F(\x,\y_{k}(\x,\z))\right\rbrace$ with $\{\y_{k}(\x,\z) \}$ defined in Eq.~\eqref{yk_def}, our proposed IAPTT-GM generates sequence $\{(\x^t, \z^t)\}$ for solving following approximation problem to BLO in Eq.~\eqref{single_reform},
	\begin{equation}\label{single_reform_app}
	\min_{\x \in \X, \z \in \Y} \varphi_K(\x, \z).
	\end{equation}
	This section is mainly devoted to the convergence of solutions of approximation problems in Eq.~(\ref{single_reform_app}) towards those of the original BLO in Eq.~(\ref{single_reform}).
	
	\begin{asu}\label{assum1}We make following standing assumptions throughout this section.
		\begin{itemize}
			\item[(1)] $F, f: \mathbb{R}^{n}\times\mathbb{R}^{m}\rightarrow\mathbb{R}$ are continuous functions.
			\item[(2)] $\nabla f$ is continuous and $\nabla_{\y} f$ is $L_{f}$ Lipschitz continuous with respect to $\y$ for any $\x\in \X$.
			\item[(3)] $\X$ and $\Y$ are convex compact sets.
			\item[(4)] $\S(\x)$ is nonempty for any $\x \in \X$.
			\item[(5)] For any $(\bar{\x},\bar{\y})$ minimizing $F(\x,\y)$ over constraints $\x \in \X, \y \in \Y$ and $\y \in \hat{\S}(\x)$, it holds that $\bar{\y} \in \S(\x)$.
		\end{itemize}
	\end{asu}

	Note that  $\hat{\S}(\x)$ denotes the set of LL stationary points, i.e., $\hat{\S}(\x) = \{\y \in \Y | 0 = \nabla_{\y}f(\x,\y) + \mathcal{N}_{\Y}(\y) \}.$ It should be noticed that $\y \in \hat{\S}(\x)$ if and only if $\mathcal{R}_{\alpha}(\x,\y) = 0$. 
	Assumption \ref{assum1} is standard in bi-level optimziation related literature, which will be shown to be satisfied for the numerical example given in Section \ref{sec4.1}. %\textcolor[rgb]{1.0,0.0,0.0}{The set constraint $\nabla_{\y} f$ is only used to guarantee the completeness of our theoretical analysis. Thus in application scenarios, we can just consider the constraint as an extra large set, so that all the varietals are automatically in this feasible set. In this way, it is natural to ignore the projection operation in practical computations.}
	
	As discussed in the preceding section, for each $\x \in \X$, $\z \in \Y$ and $K>0 $, there exists at least a $\tilde{K}$ such that along the selection $\y_{\tilde{K}} (\x, \z)$, $\mathcal{R}_{\alpha}(\x, \y_{\tilde{K}} (\x, \z))$ uniformly converges to zero w.r.t. $\x$ and $\z$, as $K$ tending infinity. We next specifically show the existence of such index $\tilde{K}$. To this end, $\tilde{K}$ can be chosen by optimizing $\| \mathcal{R}_{\alpha} (\x,\y_{k}(\x,\z)) \|$ among the indices $k= 0, 1, ... , K$. In particular, as stated in the following lemma,
	$\| \mathcal{R}_{\alpha} (\x,\y_{\tilde{K}}(\x,\z)) \|$ uniformly decreases with a $\frac{1}{\sqrt{K}}$ rate on $\X \times \Y$ as $K$ increases. 
	
	\begin{lemma}\label{uniform_con}
		Let $\{\y_{k}(\x,\z) \}$ be the sequence defined in Eq.~\eqref{yk_def} with $\alpha^k_{\y} \in [\underline{\alpha}_{\y},\overline{\alpha}_{\y}] \subset (0,\frac{2}{L_f})$, there exists $C_f > 0$ such that
		\begin{equation}
		\min_{0 \le k \le K} \|  \mathcal{R}_{\underline{\alpha}_{\y}} (\x,\y_{k}(\x,\z)) \| \le \frac{C_f}{\sqrt{K+1}}, \quad \forall \x \in \X, \z \in \Y.
		\end{equation}
		
	\end{lemma}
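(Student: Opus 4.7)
The argument is a standard descent-lemma-plus-telescoping analysis of projected gradient descent on a nonconvex smooth objective, augmented with one classical monotonicity fact to bridge the actual step size $\alpha^k_{\y}$ and the reference step size $\underline{\alpha}_{\y}$ appearing in the statement.

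First, by the $L_f$-Lipschitz continuity of $\nabla_{\y}f(\x,\cdot)$ from Assumption~\ref{assum1}(2), the descent lemma gives
\begin{equation*}
f(\x,\y_{k+1}) \le f(\x,\y_k) + \langle \nabla_{\y} f(\x,\y_k),\, \y_{k+1}-\y_k\rangle + \tfrac{L_f}{2}\|\y_{k+1}-\y_k\|^2.
\end{equation*}
The variational characterisation of the projection $\y_{k+1}=\mathtt{Proj}_{\Y}(\y_k-\alpha^k_{\y}\nabla_{\y} f(\x,\y_k))$, tested at the feasible point $\y_k \in \Y$, yields $\langle \nabla_{\y} f(\x,\y_k),\, \y_k-\y_{k+1}\rangle \ge \|\y_{k+1}-\y_k\|^2/\alpha^k_{\y}$. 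Combining these with $\alpha^k_{\y} \le \overline{\alpha}_{\y} < 2/L_f$ produces the uniform per-iteration decrease
\begin{equation*}
f(\x,\y_{k+1}) \le f(\x,\y_k) - c\,\|\y_{k+1}-\y_k\|^2, \qquad c := \tfrac{1}{\overline{\alpha}_{\y}} - \tfrac{L_f}{2} > 0.
\end{equation*}

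Next, I telescope this inequality from $k=0$ to $k=K$ and use the continuity of $f$ on the compact set $\X\times\Y$ (Assumption~\ref{assum1}(1),(3)) to bound $f(\x,\y_0)-f(\x,\y_{K+1}) \le M$, where $M := \sup_{\X\times\Y} f - \inf_{\X\times\Y} f < \infty$ is independent of $(\x,\z)$. A pigeonhole step then yields $\min_{0 \le k \le K}\|\y_{k+1}-\y_k\| \le \sqrt{M/(c(K+1))}$. Since $\y_{k+1}-\y_k = -\mathcal{R}_{\alpha^k_{\y}}(\x,\y_k)$, it remains to pass from the varying step size $\alpha^k_{\y}$ to the fixed reference $\underline{\alpha}_{\y}$. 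For this I invoke the classical monotonicity of the composite gradient residual in its step-size parameter: for any convex closed set $\Y$ and fixed $\y,g$, the map $\alpha \mapsto \|\y-\mathtt{Proj}_{\Y}(\y-\alpha g)\|$ is non-decreasing on $(0,\infty)$, a short consequence of firm non-expansiveness of the metric projection. Specialising with $g=\nabla_{\y} f(\x,\y_k)$ and $\underline{\alpha}_{\y} \le \alpha^k_{\y}$ gives $\|\mathcal{R}_{\underline{\alpha}_{\y}}(\x,\y_k)\| \le \|\mathcal{R}_{\alpha^k_{\y}}(\x,\y_k)\| = \|\y_{k+1}-\y_k\|$, and the claim follows with $C_f := \sqrt{M/c}$.

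The only step that is not entirely routine is the last one: remembering the correct direction of the step-size monotonicity of the residual norm (it is $\|\mathcal{R}_{\alpha}\|$, not $\|\mathcal{R}_{\alpha}\|/\alpha$, that is non-decreasing in $\alpha$). Everything else is the textbook analysis of projected gradient descent on a Lipschitz-smooth nonconvex problem, and the uniformity of the bound in $(\x,\z)$ falls out automatically from the compactness of $\X\times\Y$ together with the continuity of $f$.
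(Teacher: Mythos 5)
Your proposal is correct and follows essentially the same route as the paper's proof: a sufficient-decrease inequality for the projected gradient step, telescoping over $k=0,\dots,K$ with the bound $f(\x,\y_0)-f(\x,\y_{K+1})\le M-m$ from compactness, and the step-size monotonicity $\|\mathcal{R}_{\underline{\alpha}_{\y}}(\x,\y_k)\|\le\|\mathcal{R}_{\alpha^k_{\y}}(\x,\y_k)\|$ to pass to the reference step size. The only difference is presentational: the paper cites \cite[Lemma 10.4]{FirstOrderMethods} and \cite[Theorem 10.9]{FirstOrderMethods} for the two key facts, whereas you derive them directly from the descent lemma, the variational characterisation of the projection, and firm non-expansiveness.
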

	
	As shown in the Appendix, the proof of Lemma \ref{uniform_con} for the existence cannot offer us
	an explicit identification of the exact selection of $\tilde{K}$. Alternatively, we construct the approximation by a pessimistic trajectory truncation strategy, minimizing the worst case of all selections of $\{\y_{k} (\x, \z)\}$, i.e., $\varphi_{K}(\x,\z)$. By further solving the approximated problems $\min \varphi_{K}(\x,\z)$, we shall provide a lower bound estimation for the optimal value of the BLO problem in Eq.~(\ref{single_reform}).
	
	\begin{lemma}\label{lem1}
		Let $(\x_K, \z_K) \in \underset{\x \in \X, \z \in \Y}{\operatorname{argmin}} \varphi_K(\x,\z)$, then
		\begin{equation}
		\varphi_K(\x_K, \z_K) \le \inf_{\y \in \hat{\S}(\x)} F(\x,\y), \quad \forall \x \in \X.
		\end{equation}
	\end{lemma}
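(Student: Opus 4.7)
The plan is to exploit the key observation that any stationary point of the LL problem is automatically a fixed point of the projected gradient descent dynamics defining $\y_k(\x,\z)$. Once this is in place, the inequality falls out from the optimality of $(\x_K,\z_K)$.

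First I would fix an arbitrary $\x \in \X$ and an arbitrary $\y \in \hat{\S}(\x)$. By definition $0 \in \nabla_{\y}f(\x,\y) + \mathcal{N}_{\Y}(\y)$, and because $\Y$ is convex this is equivalent to the fixed-point identity $\y = \mathtt{Proj}_{\Y}(\y - \alpha \nabla_{\y}f(\x,\y))$ for every $\alpha > 0$ (i.e.\ $\mathcal{R}_{\alpha}(\x,\y)=0$, as already noted in the text right after Assumption \ref{assum1}). I would then feed $\z = \y$ as the auxiliary initialization into the recursion in Eq.~\eqref{yk_def}: since $\y_0(\x,\y) = \y$ and the fixed-point identity holds for every step size $\alpha_{\y}^k$, a trivial induction gives $\y_k(\x,\y) = \y$ for all $k = 0, 1, \ldots, K$.

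Consequently $\varphi_K(\x,\y) = \max_{1\le k\le K} F(\x,\y_k(\x,\y)) = F(\x,\y)$. The pair $(\x,\y)$ is feasible for the minimization problem defining $(\x_K,\z_K)$ because $\x \in \X$ and $\y \in \hat{\S}(\x) \subseteq \Y$, so by the optimality of $(\x_K,\z_K)$,
\begin{equation*}
\varphi_K(\x_K,\z_K) \le \varphi_K(\x,\y) = F(\x,\y).
\end{equation*}
Since this holds for every $\y \in \hat{\S}(\x)$, I would take the infimum over $\y \in \hat{\S}(\x)$ to conclude $\varphi_K(\x_K,\z_K) \le \inf_{\y\in\hat{\S}(\x)} F(\x,\y)$, and since $\x \in \X$ was arbitrary the lemma is proved.

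There is no serious obstacle here; the only thing to be careful about is invoking the correct characterization of $\hat{\S}(\x)$, namely that stationarity $0 \in \nabla_{\y}f(\x,\y) + \mathcal{N}_{\Y}(\y)$ is equivalent to $\y$ being a fixed point of the projected gradient map for every positive step size (this uses convexity of $\Y$, guaranteed by Assumption \ref{assum1}(3)). Once that equivalence is stated cleanly, the induction showing $\y_k(\x,\y)\equiv \y$ is immediate and the rest is just the definition of a minimizer.
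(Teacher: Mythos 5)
Your proposal is correct and follows essentially the same route as the paper's proof: both hinge on the observation that any $\y\in\hat{\S}(\x)$ satisfies $\mathcal{R}_{\alpha}(\x,\y)=0$ and is therefore a fixed point of the projected gradient recursion, so that initializing with $\z=\y$ gives $\varphi_K(\x,\y)=F(\x,\y)$ and optimality of $(\x_K,\z_K)$ does the rest. The only cosmetic difference is that the paper picks an $\epsilon$-approximate minimizer $\y_\epsilon\in\hat{\S}(\x)$ and sends $\epsilon\to 0$, whereas you take the infimum over $\y\in\hat{\S}(\x)$ at the end; the two are equivalent ways of handling a possibly unattained infimum.
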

	Upon together with the uniform convergence result in Lemma \ref{uniform_con}, the gap between the lower bound provided by  $\min \varphi_{K}(\x,\z)$ and the true optimal value of the BLO problem in Eq.~(\ref{single_reform}) eventually vanishes. To fill in this gap and present the main convergence result of our proposed IAPTT-GM, we need the continuity of $\mathcal{R}_{\alpha}(\x,\y)$.
	Indeed, it follows from \cite[Theorem 6.42]{beck2017first} that $\mathtt{Proj}_{\Y}$ is continuous. Combined with the assumed continuity of $\nabla_{\y} f(\x,\y)$, we get the desired continuity of $\mathcal{R}_{\alpha}(\x,\y)$ immediately.
	\begin{thm2}\label{Thm1}
		Let $\{\y_{k}(\x,\z) \}$ be the sequence generated by Eq.~\eqref{yk_def} with $\alpha_{\y}^k \in [\underline{\alpha}_{\y},\overline{\alpha}_{\y}] \subset (0,\frac{2}{L_f})$,
		and $\left(\x_{K}, \z_{K}\right) \in \underset{\x \in \X, \z \in \Y}{\operatorname{argmin}} \varphi_K(\x, \z)$,
		then we have:
		\begin{itemize}
			\item[(1)] Any limit point $\bar{\x}$ of the sequence $\left\{\x_{K}\right\}$ is the solution to BLO in Eq.~\eqref{blo problem}, that is $\bar{\x} \in \underset{\x \in \X}{\operatorname{argmin}}\varphi(\x)$.
			\item[(2)] $\underset{\x \in \X, \z \in \Y}{\inf} \varphi_K(\x, \z) \rightarrow \underset{\x \in \X}{\inf} \varphi(\x)$ as $K \rightarrow \infty$.
		\end{itemize}
	\end{thm2}
	
	In the above theorem, we justify the global solutions convergence of the approximated problems. We next derive a convergence characterization regarding the local minimums of the approximated problems. In particular, the next theorem shows that any limit point of the local minimums of approximated problems is in some sense a local minimum of the bilevel problem in Eq. \eqref{blo problem}.
		
	\begin{thm2}\label{thm:local}
	Let $\{\y_{k}(\x,\z) \}$ be the sequence generated by Eq.~\eqref{yk_def} with $\alpha_{\y}^k \in [\underline{\alpha}_{\y},\overline{\alpha}_{\y}] \subset (0,\frac{2}{L_f})$, and $\left(\x_{K}, \z_{K}\right)$ be a local minimum of $\varphi_{K}(\x,\z)$ with uniform neighborhood modulus $\delta > 0$, i.e.,
	\begin{equation*}
		\varphi_K(\x_{K}, \z_{K}) \le \varphi_K(\x,\z), \quad \forall (\x,\z) \in \mathbb{B}_{\delta} (\x_K, \z_{K})\cap \X \times \Y.
	\end{equation*}
	Then we have that for any limit point $(\bar{\x}, \bar{\z})$ of the sequence $\{(\x_{K}, \z_K)\}$, there exists a limit point $\bar{\y}$ of the sequence $\{\y_{K}(\x_K,\z_K) \}$ such that $\bar{\y} \in \hat{\S}(\bar{\x}) $ and $(\bar{\x}, \bar{\y})$ 
%	is a local minimum of following relaxed problem of the BLO problem in Eq. \eqref{blo problem},
%	\[
%	\mathop{\min}\limits_{\x\in \X, \y} F(\x,\y), \quad s.t. \quad  \y \in \hat{\S}(\x),
%	\]
%	that is, 
satisfies that
	there exists $\tilde{\delta} > 0$ such that
	\begin{equation*}
	F(\bar{\x}, \bar{\y}) \le F(\x, \z) , \quad \forall (\x, \z) \in \mathbb{B}_{\tilde{\delta}} (\bar{\x},\bar{\z})\cap \{\x \in \X, \z \in \Y ~|~ \z \in \hat{\S}(\x) \}.
	\end{equation*}
	\end{thm2}

\subsection{Theoretical Findings of IA-GM (A) for BLO with LLC}
\label{LLC discussion}
A byproduct of our study, which has its own interest, is that thanks to the involved initialization auxiliary, our theory can improve those existing results for classical gradient methods with accelerated gradient descent dynamical iterations under LLC.
Nesterov’s acceleration technique \cite{nesterov1983a} has been used widely for solving convex optimization problem and it greatly improves the convergence rate of gradient descent.
To illustrate our result, we will take the Nesterov’s acceleration proximal gradient method \cite{beck2009fast} as the embedded optimization dynamics in classical gradient method for example.

%When the LL admits multiple solutions, for a fixed initial value $\y_0$,
%there is no theoretical guarantee for
%a returned point in the LL solution set, simultaneously optimizing the UL objective.
%It has long been used in practical experiments, however,
%no rigorous theory is currently known to explain
%the numerical efficiency.
%The introduced initialization auxiliary variable $\z$ in our Algorithm \ref{alg:innerloop} easily addresses this issue.
Thanks to the LLC setting, $\mathcal{R}_{\alpha}(\x,\y_K(\x, \z))$ may uniformly converge to zero w.r.t. UL variable $x$ and auxiliary variable $\z$,
thus the pessimistic trajectory truncation operation can be removed.
Subsequently,
we slightly simplify our algorithm that $\bar{k}$ is simply taken as $K$, thus
the approximation objective admits a succinct form, i.e., $F(\x,\y_K(\x,\z))$.

In summary, by constructing $\y_k(\x, \z)$ through following Nesterov’s acceleration dynamics,
\begin{equation}\label{yk_def2}
\begin{aligned}
&\y_0(\x,\z) = \z, \qquad t_0 =1, \quad  t_{k+1} = \frac{1+\sqrt{1+t_{k}^2}}{2},\ k=0,\cdots,K-1,  \\
& \u^{k+1}(\x,\z) = \y^{k+1}(\x,\z) + \left(\frac{t_{k}-1}{{t_{k+1}}}\right)(\y^{k+1}(\x,\z) - \y^{k}(\x,\z)),\ k=0,\cdots,K-1, \\
&\y_{k+1}(\x,\z) = \mathtt{Proj}_\Y \left( \u^{k}(\x,\z) - \alpha \nabla_{\y} f(\x,\u^{k}(\x,\z)) \right),\ k=0,\cdots,K-1,
\end{aligned}
\end{equation}
where $\alpha >0$ is the step size, we propose an accelerated Gradient-based Method with Initialization Auxiliary, named IA-GM(A), via minimizing the approximation objective function,
\[
\min  _{\x \in \X, \z \in \Y} \phi_{K}(\x, \z):= F(\x,\y_K(\x,\z)).
\]
The detailed description of the proposed IA-GM(A) is stated in the Supplemental Material.
%The convergence of solutions of these approximation problems towards those of the original BLO in Eq.~(\ref{single_reform}) in LLC case will be shown.

We suppose Assumption \ref{assum1}(1)-(4) are satisfied throughout this subsection.
The convergence result of our proposed IA-GM(A) with LLC assumption is given as below.
\begin{thm2}\label{convexThm}
	Assume that the generated sequence $\{\y_{k}(\x,\z)\}$ satisfies that $\y_{k}(\x,\z) \in \Y$, and $\y_{k}(\x,\z) = \z$ for any $\z \in \S(\x)$, $\x \in \X$, and either
	\begin{itemize}
		\item[(a)] for any $\epsilon>0$, there exists $k(\epsilon)>0$ such that whenever $K>k(\epsilon)$, $$\sup _{\x \in \X, \z \in \Y}\left\{f\left(\x, \y_{K}(\x, \z)\right)-f^{*}(\x)\right\} \leq \epsilon$$ whenever $K>k(\epsilon)$, or
		\item[(b)] there exists $\alpha>0$, for any $\epsilon>0$, there exists $k(\epsilon)>0$ such that, $$\sup _{\x \in \X, \z \in \Y}\left\|\mathcal{R}_{\alpha}\left(\x, \y_{K}(\x, \z)\right)\right\| \leq \epsilon.
		$$
	\end{itemize}
		Let $\left(\x_{K}, \z_{K}\right) \in \underset{\x \in \X, \z \in \Y}{\operatorname{argmin}} \phi_{K}(\x, \z):= F(\x,\y_K(\x,\z))$, then we have
	\begin{itemize}
		\item[(1)] any limit point $\bar{x}$ of the sequence $\left\{\x_{K}\right\}$ satisfies that $\bar{\x} \in \underset{\x \in \X}{\operatorname{argmin}} \varphi(\x)$, i.e., $\bar{x}$ is the solution to BLO \eqref{blo problem}.
		\item[(2)] $\underset{\x \in \X, \z \in \Y}{\inf} \phi_{K}(\x, \z) \rightarrow \underset{\x \in \X}{\inf} \varphi(\x)$ as $K \rightarrow \infty$.
	\end{itemize}
\end{thm2}
%\begin{rem}
%	Theorem \ref{convexThm} actually improves the converge results in \cite{franceschi2018bilevel}. In fact, the proof of convergence given by \cite{franceschi2018bilevel} depends on the strong convexity assumption of LL objective $f$, and the convergence cannot be verified for the accelerated first-order algorithm. Our analysis only relies on the LLC assumption and a weak assumption for the dynamical iterations of LL problem. %We will give a series of first-order algorithms satisfying these assumptions and prove them in the appendix.
%\end{rem}
Next, we show that the Nesterov’s acceleration dynamics satisfy all the assumptions required in the above convergence theorem.
\begin{thm2}
	\label{thm2}
	Let $\{\y_{k}(\x,\z)\}$ be the sequence generated by Nesterov’s acceleration dynamics in Eq.~\eqref{yk_def2} with $\alpha = \frac{1}{L_f}$.%, i.e.,
	%$\y_{k+1}(\x,\z) = \mathtt{Proj}_\Y \left( \u^{k}(\x,\z) - \alpha \nabla_y f(\x,\u^{k}(\x,\z)) \right)$, $t_{k+1} = \frac{1+\sqrt{1+t_{k}^2}}{2}$, $\u^{k+1}(\x,\z) = \y^{k+1}(\x,\z) + \left(\frac{t_{k}-1}{{t_{k+1}}}\right)(\y^{k+1}(\x,\z) - \y^{k}(\x,\z))$, for $k = 0,\ldots,K-1$ and $\y^0(\x,\z) = \z$, $t_0 = 1$.
	 ~Then $\{\y_{k}(\x,\z)\}$ satisfies all the assumptions required by Theorem \ref{convexThm}.
\end{thm2}

\begin{rem}
	Our convergence result Theorem \ref{convexThm} is not only for $\y_{k}$ generated by Nesterov’s acceleration dynamics. It is a general convergence result that is applicable for the case where $\y_{k}$ is generated by other dynamics. And the assumptions required in Theorem \ref{convexThm} is weak enough to be satisfied by the dynamics introduced by many first-order methods on convex LL problem in Eq.~\ref{LLP}.
\end{rem}
	
\section{Experimental Results }
\label{Experiments}
	In this section, we first verify the theoretical convergence results on non-convex numerical problems compared with existing EG methods and IG methods. Then we test the performance of IAPTT-GM and demonstrate its generalizability to real-world BLO problems with non-convex followers, which are caused by non-convex regularization and neural network structures. In addition, we further validate the performance of the accelerated version (i.e., IA-GM (A)) under LLC with numerical examples and data hyper-cleaning tasks~\footnote{The code is available at \href{http://github.com/vis-opt-group/IAPTT-GM}{http://github.com/vis-opt-group/IAPTT-GM}.}.
\subsection{Numerical Verification}
\label{sec4.1}
To verify the convergence property under assumptions provided in Section~\ref{Convergence Analysis}, we consider the following non-convex BLO problem:
\begin{equation}\label{non-convex toy}
	\begin{aligned}
\min _{x \in \X, y \in \mathbb{R}} &x+xy,\quad  s.t.\quad y \in \underset{y \in\Y}{\operatorname{argmin}}-\sin (x y),
	\end{aligned}
\end{equation}
where $\X=[1,10]$ and $\Y=[-2,2]$. Given any $x\in\X$, it satisfies ${\operatorname{argmin}}_{y \in \Y}-\sin (x y)=\left\{\left(2 k \pi+{\pi}/{2}\right)/{x} \mid k \in \mathbb{Z}\right\} \cap \Y$ and $ \min_{y \in \Y}  -\sin(x y)=-1$. The unique solution is $(x^*, y^*)=\left({11\pi}/{4}, -2\right)$. It should be noted that the LL problem of Eq.~\eqref{non-convex toy} has multiple global minima, which can significantly show the advantage of initialization auxiliary technique. It can be easily verified that the above toy example satisfies Assumption~\ref{assum1}.

\begin{figure*}[htbp]
	\centering
    \begin{subfigure}[]{0.24\linewidth} \includegraphics[height=0.8\linewidth,width=1.0\linewidth]{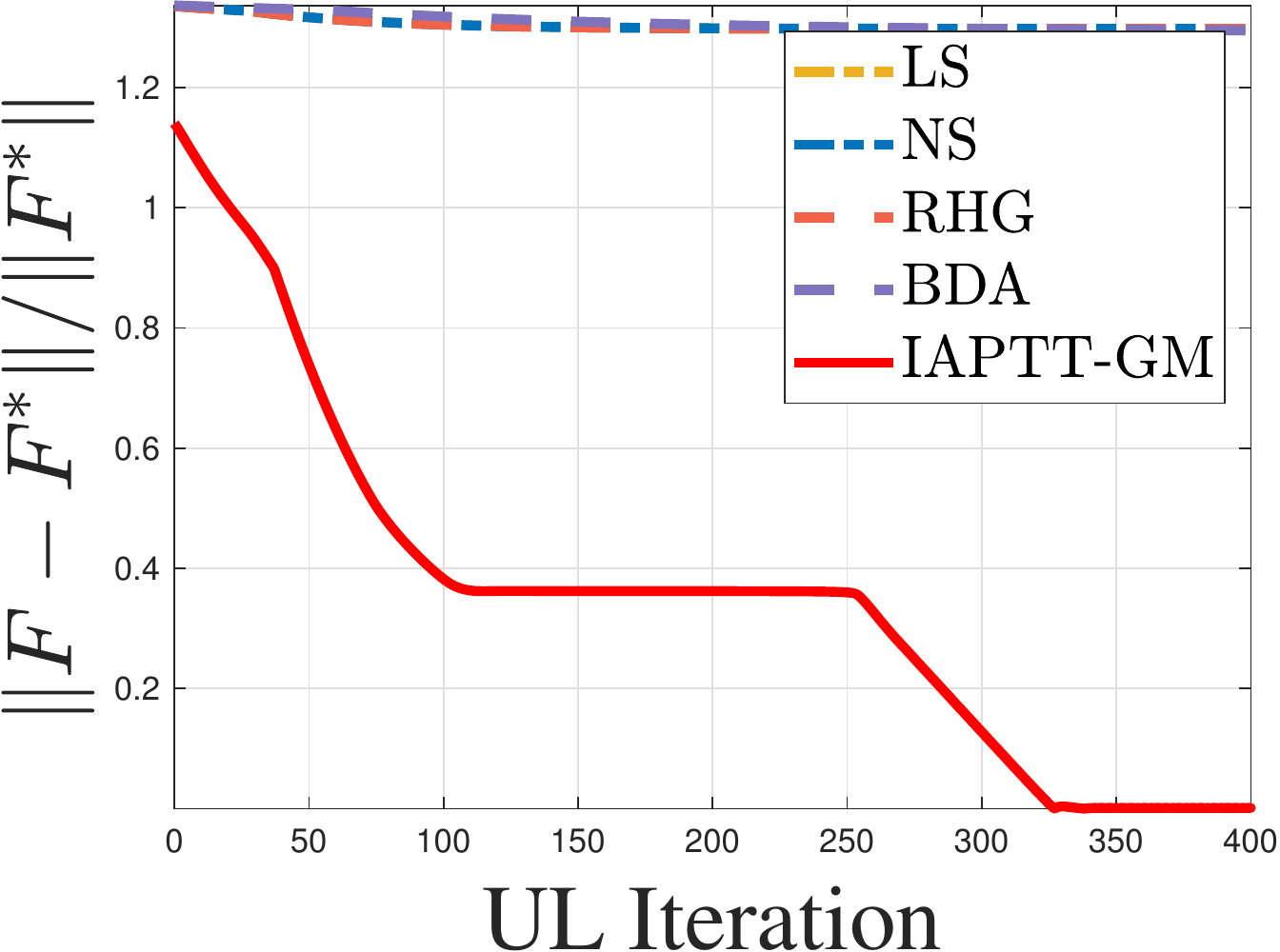}\subcaption{$x_{0}=1,y_{0}=2$}\label{toya}
    \end{subfigure}
    \begin{subfigure}[]{0.24\linewidth}
    \includegraphics[height=0.8\linewidth,width=1.0\linewidth]{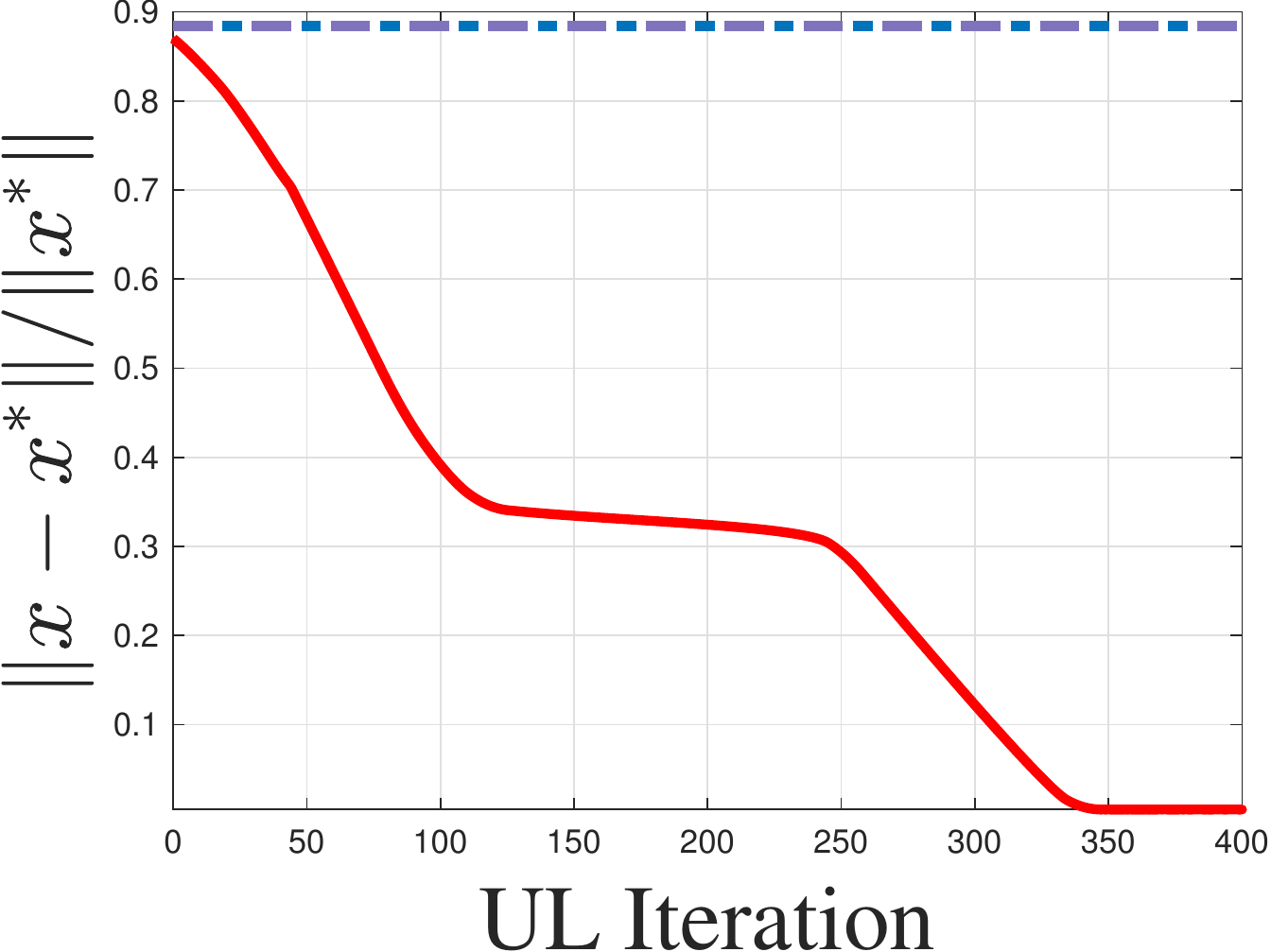}\subcaption{$x_{0}=1,y_{0}=2$}\label{toyb}
    \end{subfigure}
    \begin{subfigure}[]{0.24\linewidth}
    \includegraphics[height=0.8\linewidth,width=1.0\linewidth]{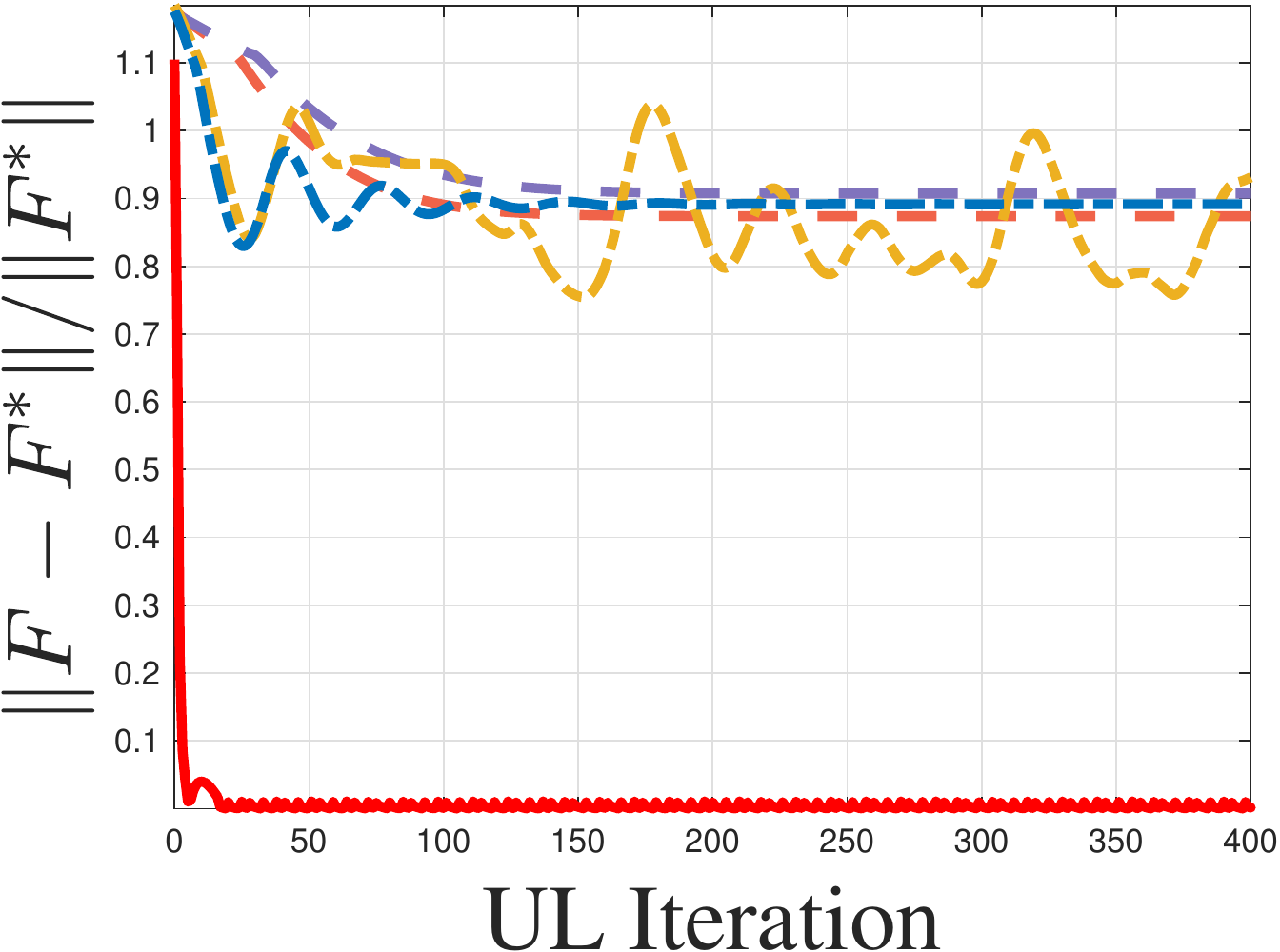}\subcaption{$x_{0}=7,y_{0}=-1$}\label{toyc}
    \end{subfigure}
    \begin{subfigure}[]{0.24\linewidth}
    \includegraphics[height=0.8\linewidth,width=1.0\linewidth]{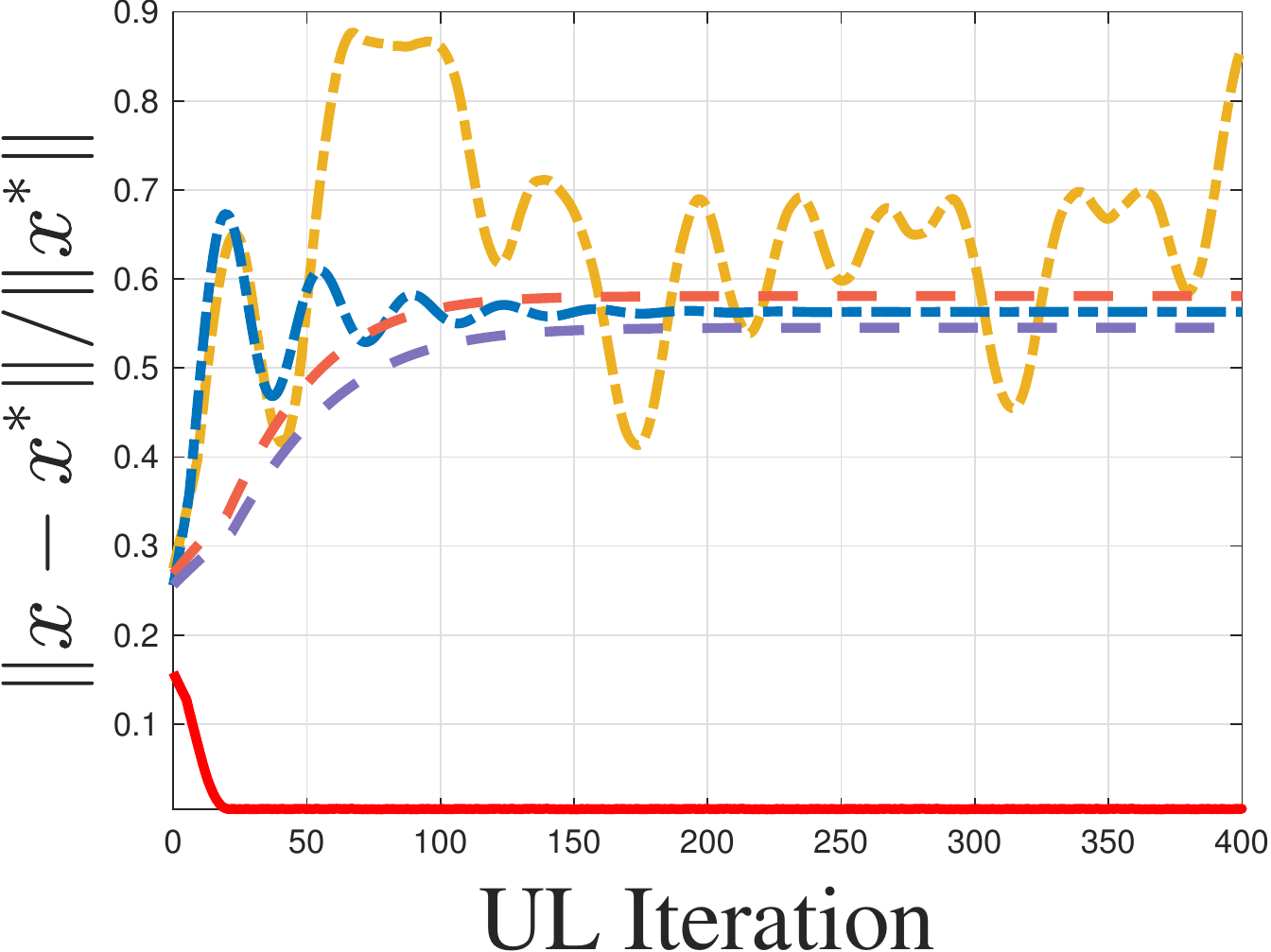}\subcaption{$x_{0}=7,y_{0}=-1$}\label{toyd}
    \end{subfigure}
    \begin{subfigure}[]{0.24\linewidth}
    \includegraphics[height=0.8\linewidth,width=1.0\linewidth]{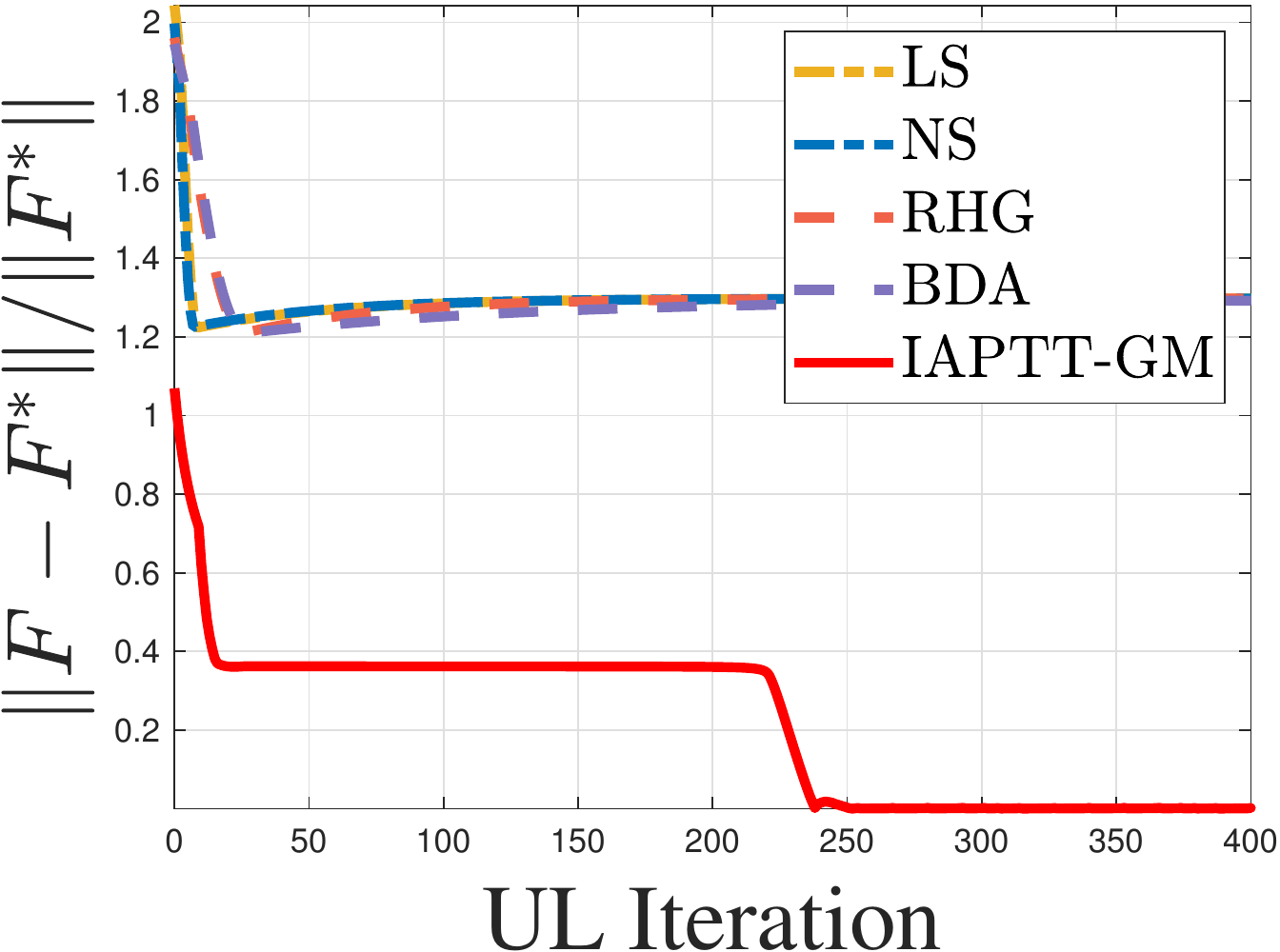}\subcaption{$x_{0}=5,y_{0}=1$}\label{toye}
    \end{subfigure}
    \begin{subfigure}[]{0.24\linewidth}
    \includegraphics[height=0.8\linewidth,width=1.0\linewidth]{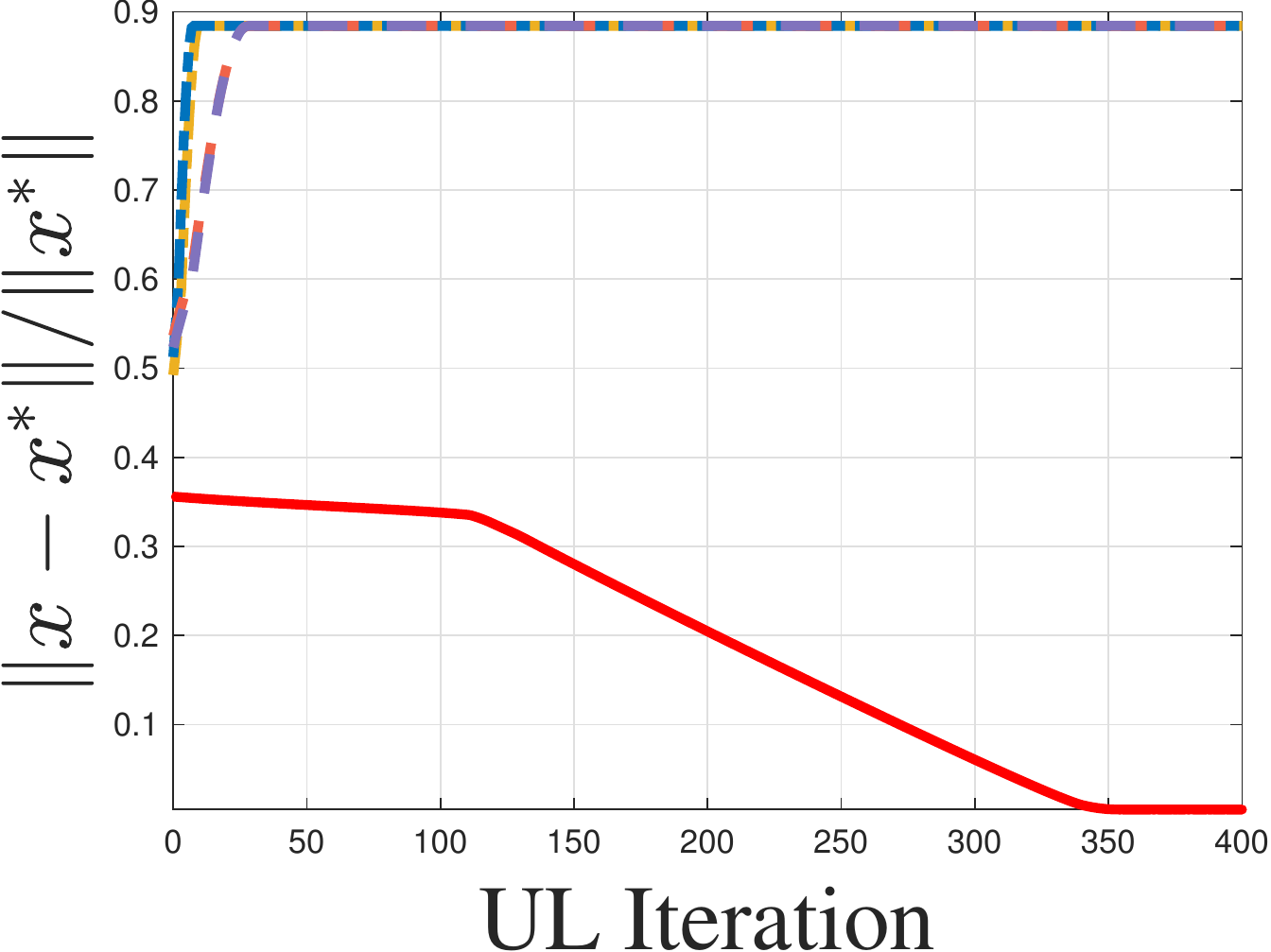}\subcaption{$x_{0}=5,y_{0}=1$}\label{toyf}
    \end{subfigure}
    \begin{subfigure}[]{0.24\linewidth}
    \includegraphics[height=0.8\linewidth,width=1.0\linewidth]{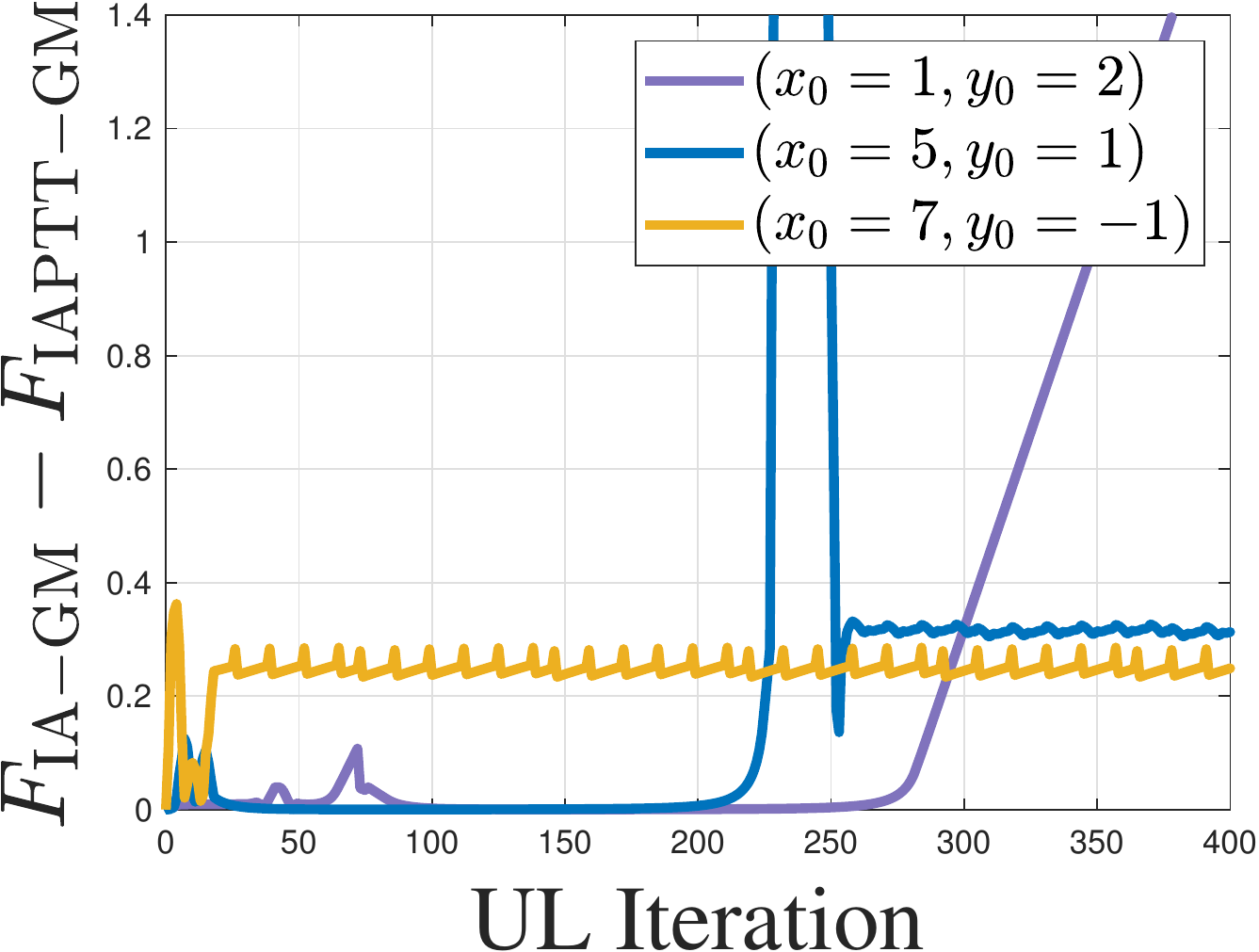}\subcaption{}\label{toyg}
    \end{subfigure}
    \begin{subfigure}[]{0.24\linewidth}
    \includegraphics[height=0.8\linewidth,width=1.0\linewidth]{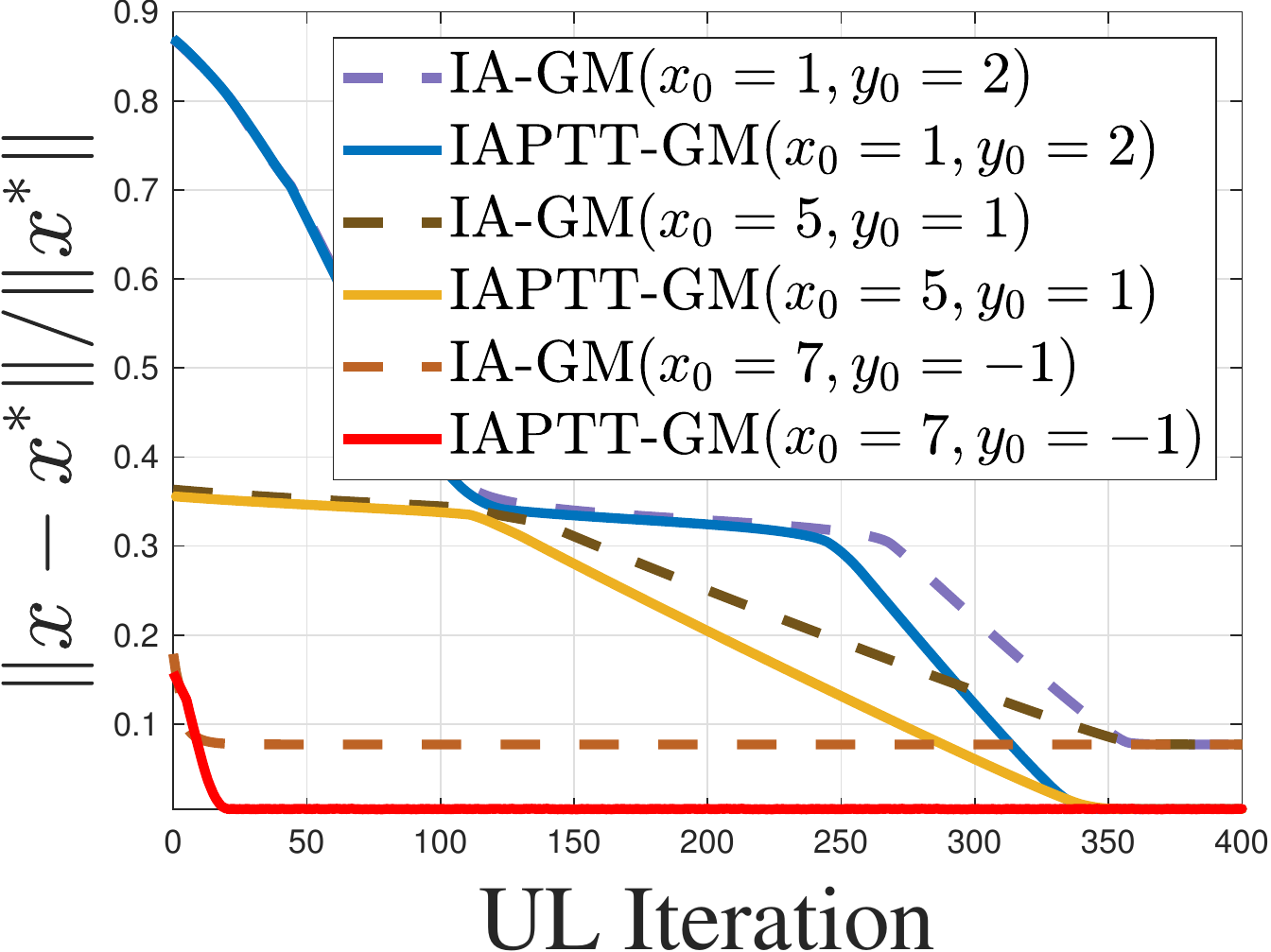}\subcaption{}\label{toyh}
    \end{subfigure}
	\caption{Illustrating the convergence behavior of $\Vert F-F^*\Vert/\Vert F^*\Vert$ and $\Vert x-x^*\Vert/\Vert x^*\Vert$ as the training proceeds. $F_{\mathrm{IAPTT-GM}}$ and $F_{\mathrm{IA-GM}}$ denote the UL objectives of IAPTT-GM and IA-GM, respectively. Three representative initialization points for UL and LL variables are $(x_{0},y_{0})=(1,2)$, $(x_{0},y_{0})=(5,1)$, $(x_{0},y_{0})=(7,-1)$. }\label{toy convergence}
	\vspace{-0.4cm}
\end{figure*}

In Figure~\ref{toy convergence}, we separately compared IAPTT-GM with EG methods such as RHG~\cite{franceschi2017forward}, BDA~\cite{liu2020generic}, IG methods such as LS~\cite{pedregosa2016hyperparameter}, NS~\cite{rajeswaran2019meta} and IA-GM. From Figure 1.(\subref{toya}) to Figure 1.(\subref{toyb}), we can observe that different initialization points only slightly affect the convergence speed of IAPTT-GM. With initialization points distant from $(x^*, y^*)$, IAPTT-GM can still achieve optimal solution of UL variables and optimal objective value, while other methods fail to converge to the true solution. In Figure 1.(\subref{toyg}) and Figure 1.(~\subref{toyh}), we compare the performances of IAPTT-GM with IA-GM, which has no convergence guarantee without LLC assumption. As shown, IA-GM fails to converge to the true solution eventually, which validates the necessity of PTT technique and the effectiveness of IAPTT-GM.

\begin{wrapfigure}{r}{6cm}
	%\vspace{-0.4cm} %
	\begin{minipage}[t]{1.\linewidth}
		\centering
		\includegraphics[width=1.0\linewidth, height=0.7\linewidth]{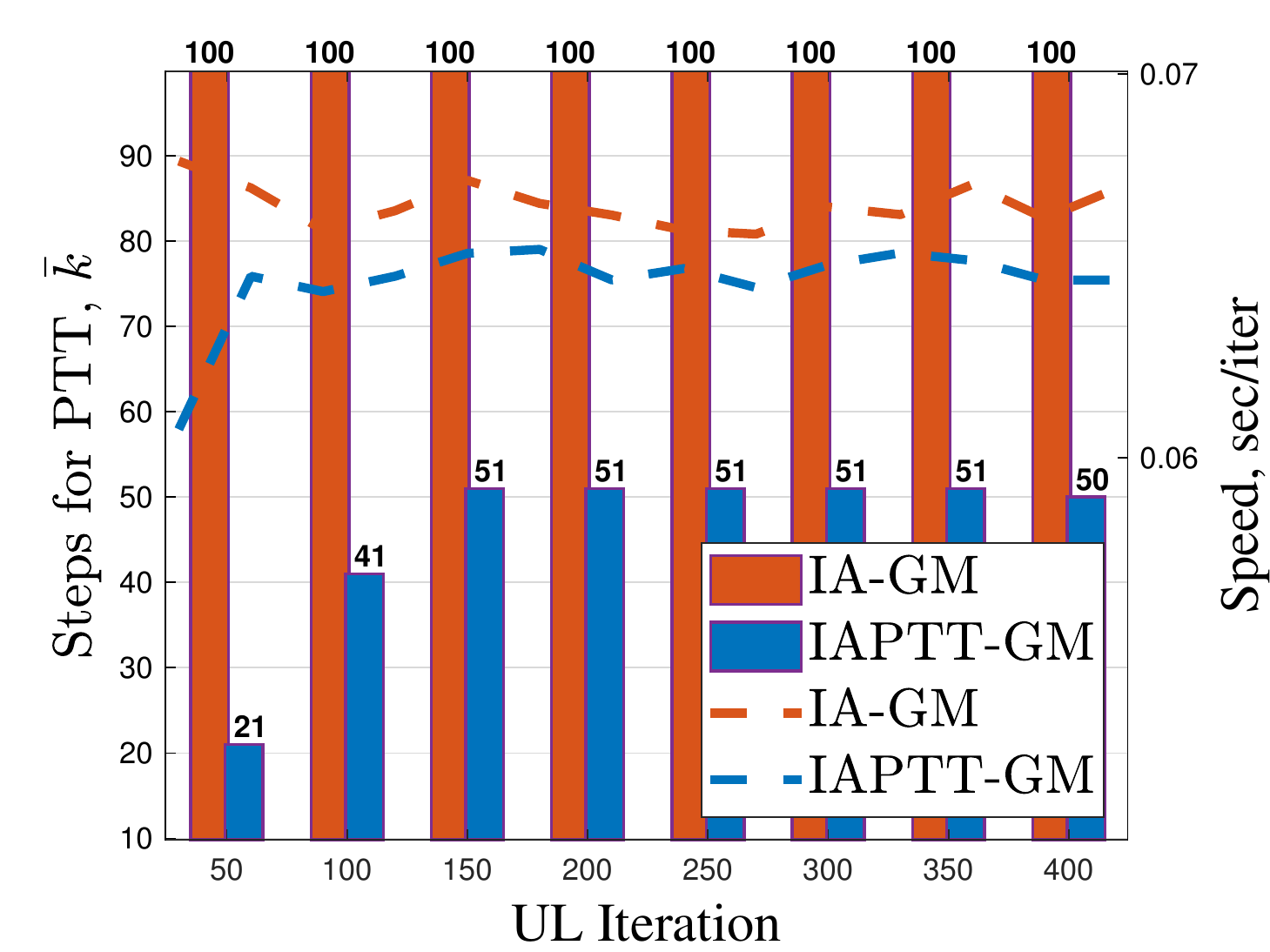}
		\caption{Illustrating average steps for PTT technique and average running speed of the numerical example. Note that we conduct the experiments using more LL iterations so as to reduce the measurement error.}\label{pmax fig}
	\end{minipage}
	\vspace{-0.6cm}
\end{wrapfigure}

\paragraph{Runtime and Memory Analysis.} In Figure~\ref{pmax fig}, we report the average steps $\bar k$ of IAPTT-GM for PTT and the iterative speed as the UL iteration increases. In comparison with IA-GM, which uses default $K$ for the LL optimization loop, the changing $\bar k$ for IAPTT-GM leads to less iterations for the backward recurrent propagation and thus faster iterative updates during optimization. Although IA introduces additional variables and iterations, the PTT technique can choose a small $\bar k$, thus shortens the back-propagation trajectory for computing the UL gradient (see Figure~\ref{pmax fig}). As can be seen in Table~\ref{tab:speed analysis}, the memory required by our IAPTT-GM is less than NS, LS, and BDA and the same as that for RHG. As for the runtime, IAPTT-GM is a bit slower than RHG and NS, and faster than BDA. But please notice that the performance and the theoretical properties of IAPTT-GM are better than these existing approaches.
\begin{table}[htbp]
	\centering
	\caption{Memory and runtime of existing methods for solving the above BLO problem. We conduct the experiments using the same parameter settings in Section C of the supplementary materials.}\label{tab:speed analysis}
	\renewcommand\arraystretch{1.2}
	\setlength{\tabcolsep}{1mm}{
		\begin{tabular}{|c| c| c| c| c |c|}
			\hline 
			Metrics & LS & NS & RHG & BDA & IAPTT-GM \\
			\hline
			Memory (GB)&10.426 & 10.387 & 10.153 &10.154 & 10.153  \\
			\hline
			Runtime (Sec)&5.120 & 10.815 &  9.990 &16.800 & 10.835  \\
			\hline
		\end{tabular}
	}
	\vspace{-0.4cm}
\end{table}
\subsection{BLO with Non-convex Followers in Different Application Scenarios}
To cover various real-world BLO application scenarios, we consider two categories of non-convex LL problems caused by non-convex regularization term and neural network architectures, which refer to few-shot classification and data hyper-cleaning tasks, respectively. Please note that the set constraint $\Y$ is only used to guarantee the completeness of our theoretical analysis in Section~\ref{Convergence Analysis}. In application scenarios, we can just consider the constraint as an extra large set, so that all the variables are automatically in this feasible set. In this way, it is natural to ignore the projection operation in practical computations.

\subsubsection{Few-Shot Classification: Non-convex LL Objective }
\label{Few shot classification}
In few-shot learning, to be more specific, N-way M-shot classification tasks~\cite{lecun1998gradient}, provided with M samples from each class, we train models to take advantage of prior data from similar tasks to quickly classify unseen instances from these N classes. Following the experimental protocol~\cite{vinyals2016matching}, the model parameters are separated into two parts: the hyper representation module (parameterized by $\x$) shared by all the tasks and the last classifier (parameterized by $\y^{j}$) for $j$-th task. Define the meta training dataset as $\D=\{\D^{j}\}$, where $\D^{j}=\D^{j}_{\mathtt{t r}}\bigcup \D^{j}_{\mathtt{val}}$ corresponds to the $j$-th task.

The cross-entropy loss function is widely used for UL and LL objectives. Referring to Section~\ref{Convergence Analysis}, IAPTT-GM covers the convergence results with non-convex LL model, thus allowing flexible design of the LL objective function. For instance, while non-convex regularization terms, e.g., $\ell_{q} $ regularization with $0<q<1$, have shown effectiveness to help the LL model converge and avoid over-fitting, existing methods can only guarantee the convergence when $q \geq 1$, thus almost provide no support for non-convex objectives.  We consider the LL subproblem with non-convex loss functions by adding $\ell_{q} $ regularization~\footnote{Here, we use the smoothing regularization term defined as $\|\boldsymbol{w}\|_{q}=\left(\| \boldsymbol{w} \|_{2} +\|\boldsymbol{\varepsilon}\|_{2} \right)^{q/2}$, where $0 < q < 1$.}.  Then the UL and LL subproblems can be written as
\begin{equation}\label{few shot}
  \begin{aligned}
F\left(\mathbf{x},\left\{\mathbf{y}^{j}\right\}\right)=\sum_{j} \ell\left(\mathbf{x}, \mathbf{y}^{j}; \mathcal{\D}_{\mathtt{val}}^{j}\right),\quad
f\left(\mathbf{x},\left\{\mathbf{y}^{j}\right\}\right)=\sum_{j} \ell\left(\mathbf{x}, \mathbf{y}^{j} ; \mathcal{\D}_{\mathtt{tr}}^{j}\right)+ \|{\y^{j}}\|_{q}.
  \end{aligned}
\end{equation}
\begin{table}[htbp]
\vspace{-0.4cm}
	\centering
	\caption{Mean test accuracy of 5-way classification on tieredImageNet and miniImageNet, and the $\pm$ represents $95\%$ confidence intervals over tasks. }
	\label{tab:few shot}
	\renewcommand\arraystretch{1.2}
	\setlength{\tabcolsep}{1mm}{
		\begin{tabular}{  |c | c | c | c | c | c| }
			%\toprule
            \hline
			\multirow{2}{*}{Methods} & \multirow{2}{*}{Backbone} &\multicolumn{2}{c|}{MiniImagenet} & \multicolumn{2}{c|}{TieredImagenet} \\
			\cline{3-6}
			& & 5-way 1-shot & 5-way 5-shot & 5-way 1-shot & 5-way 5-shot \\
			\cline{1-6}
			%Matching Networks$^{\dag}$&ConvNet-4 &$43.44 \pm 0.77$& $55.31 \pm 0.73$ &-&-\\
			%Meta-learner LSTM$^{\dag}$ & ConvNet-4 & $43.56 \pm 0.84$ & $60.60 \pm 0.71$ &-&-\\
            Proto Net &ConvNet-4 &$49.42 \pm 1.84$& $68.20 \pm 0.66$ & $53.31 \pm 0.89$ & $72.69 \pm 0.74$ \\
            \hline
            Relation Net& ConvNet-4 & $50.44 \pm 0.82$ & $65.32 \pm 0.70$ &$54.48 \pm 0.93$&$65.32 \pm 0.70$\\
            \hline
            %LLAMA$^{\dag}$ &  ConvNet-4 & $49.40 \pm 0.83$ &- &&\\
            %CAVIA$^{\dag}$&ConvNet-4 & $47.24 \pm 0.65$ & $59.05 \pm 0.54$ &&\\
            %Meta-SGD$^{\ddag}$ & ConvNet-4  & $50.47\pm1.87$& $64.03\pm 0.94$\\
			%iMAML,HF$^{\dag}$  & ConvNet-4  & $49.30 \pm 1.88$&  - \\	
		    %\cline{1-4}
            %\midrule
            %\cdashline{1-4}[0.8pt/2pt]
   			MAML & ConvNet-4 & $48.70 \pm 0.75$ &  $63.11\pm 0.11$ & $49.06 \pm 0.50$ &  $67.48\pm 0.47$\\
				%\cline{3-6}
			\hline
%\cline{3-6}
			RHG & ConvNet-4  & $48.89 \pm 0.81$ & $63.02 \pm 0.70$ & $49.63 \pm 0.67$ & $66.14 \pm 0.57$\\
				%\cline{3-6}
			\hline
			T-RHG  & ConvNet-4 & $47.67 \pm 0.82$ & $63.70 \pm 0.76$ & $50.79 \pm 0.69$ & $67.39 \pm 0.60$\\
			\hline
			
            BDA& ConvNet-4 & $49.08 \pm 0.82$ & $62.17 \pm 0.70$ & $51.56 \pm 0.68$ & $68.21 \pm 0.58$\\
            \hline
            %ANIL& ConvNet-4 & $48.00 \pm 0.70$ & $62.20 \pm 0.50$ & $51.36 \pm 0.67$ & $66.60 \pm 0.58$\\
   			%\cdashline{1-6}[0.8pt/2pt]	
            %ReMeta(Baseline)& 4Conv &$49.33 \pm 0.10$ & $64.76 \pm 0.09$&-&- \\
            %ReMeta(ours)& 32-32-32-32 &$49.87 \pm 0.10 $ & $64.76 \pm 0.09$ \\
            %IATT-GM & ConvNet-4  &$\mathbf{49.80 \pm 0.61} $ & $\mathbf{64.76 \pm 0.54}$ & $\mathbf{51.86 \pm 0.68}$ &$68.01 \pm 0.60$\\
            %\cline{1-6}
			%ReMeta(Conv+fc) & 4Conv&$49.80\pm 0.10 $& $64.25 \pm 0.08$& $51.86 \pm 0.11$ & \\
			%ReMeta(Distance Norm)& 32-32-32-32 &$49.42\pm 0.10 $& $64.74 \pm 0.08$ \\
			%ReMeta(p norm)& 32-32-32-32 &$49.53\pm 0.09 $&$64.58 \pm 0.08$\\
            \hline
            %\cdashline{1-6}[0.8pt/2pt]
            MAML & ResNet-12& $51.03 \pm 0.50$ &  $68.26\pm 0.47$ & $58.58\pm 0.49$ &  $71.24\pm 0.43$  \\
            \hline
            
            RHG& ResNet-12 & $50.54\pm 0.85$ & $64.53 \pm 0.68$ & $58.19 \pm 0.76$ & $75.20 \pm 0.60$\\
            \hline
            %SNAIL$^{\dag}$ &ResNet-12& $55.71 \pm 0.99$ & $68.88 \pm 0.92$ \\
            %\cdashline{1-4}[0.8pt/2pt]
            %ReMeta(Baseline)& 4Conv &$49.33 \pm 0.10$ & $64.76 \pm 0.09$&-&- \\
            %ReMeta(Baseline)& ResNet-12 &$56.69 \pm 0.11$ & $68.21 \pm 0.09$ \\
            %\cdashline{1-6}[0.8pt/2pt]
            IAPTT-GM& ResNet-12 &$\mathbf{56.69 \pm 0.66}$ & $\mathbf{70.21 \pm 0.55}$ &$\mathbf{60.71 \pm 0.77}$&  $\mathbf{75.85 \pm 0.59}$ \\
            %\bottomrule
            \hline
		\end{tabular}
	}
%\vspace{-0.4cm}
\end{table}

Detailed information about the datasets and network architectures can be found in the supplementary materials. We report results of IAPTT-GM and various mainstream methods, e.g., Prototypical Network~\cite{snell2017prototypical}, Relation Net~\cite{sung2018learning} and T-RHG~\cite{shaban2019truncated} on miniImageNet~\cite{vinyals2016matching} and tieredImageNet~\cite{ren2018meta} datasets with two different backbones~\cite{franceschi2018bilevel, oreshkin2018tadam} in Table~\ref{tab:few shot}. As it is shown, our proposed method outperforms state-of-the-art methods on both 5-way 1-shot and 5-way 5-shot tasks.

\subsubsection{Data Hyper-Cleaning: Non-convex LL Architecture Structure}
\label{data hyper-cleaning}

Data hyper-cleaning~\cite{franceschi2017forward} aims to cleanup the corrupted data with noise label. According to ~\cite{shaban2019truncated}, the dataset is randomly split to three disjoint subsets: $\D_{\mathtt{tr}}$ for training, $\D_{\mathtt{val}}$ for validation and $\D_{\mathtt{test}}$ for testing, then a fixed proportion of the training samples in $\D_{\mathtt{tr}}$ is randomly corrupted.

Following the classical experimental protocol~\cite{franceschi2017forward}, we choose cross-entroy as the loss function $\ell$, and the UL and LL subproblem take the form of
\begin{equation}
\begin{aligned}
F(\mathbf{x}, \mathbf{y})&=\sum_{\left(\mathbf{u}_{i}, \mathbf{v}_{i}\right) \in \mathcal{D}_{\mathtt{val}}} \ell\left(\mathbf{y}(\mathbf{x}); \mathbf{u}_{i}, \mathbf{v}_{i}\right), \quad f(\mathbf{x}, \mathbf{y})&=\sum_{\left(\mathbf{u}_{i}, \mathbf{v}_{i}\right) \in \mathcal{D}_{\mathtt{tr}}}[\sigma(\mathbf{x})]_{i} \ell\left(\mathbf{y} ; \mathbf{u}_{i}, \mathbf{v}_{i}\right),
\end{aligned}
\end{equation}

where $\left(\mathbf{u}_{i},\mathbf{v}_{i}\right)$ denotes the data pair and $\sigma(\x)$ represents the element-wise sigmoid function on $\x$.
We define the hyperparameter $\x$ as a vector being trained to label the noisy data, of which the dimension equals to the number of training samples. The LL variables parameterized by $\y$ contain the weights and bias of fully connected layers. 

\begin{wrapfigure}{r}{7cm}
	%\vspace{-0.4cm} %
	\begin{minipage}[t]{1.0\linewidth}
		\centering
		\makeatletter\def\@captype{table}\makeatother\caption{Reporting results of existing methods for solving data hyper-cleaning tasks. Acc. and F1 score denote the test accuracy and the harmonic mean of the precision and recall, respectively.}\label{tab:hyper cleaning}
		\renewcommand\arraystretch{1.2}
		\setlength{\tabcolsep}{1mm}{
			\begin{tabular}{|c| cc| cc| }
				\hline
				\multirow{2}{*} { Method } & \multicolumn{2}{c|} { MNIST } & \multicolumn{2}{c|} { FashionMNIST } \\
				\cline { 2 - 5 }
				& Acc. & F1 score  & Acc. & F1 score \\
				\hline
				LS & $89.19$ & $85.96$ & $83.15$ & $85.13$  \\
				\hline
				NS & $87.54$ & $89.58$ & $81.37$ & $87.28$  \\
				\hline
				RHG & $87.90$ & $89.36$& $81.91$ & $87.12$   \\
				\hline
				T-RHG & $88.57$ & $89.77$ & $81.85$ & $86.76$   \\
				\hline
				BDA & $87.15$ & $90.38$ &  $79.97$ & $88.24$  \\
				%\cdashline{1-5}[0.8pt/2pt]
				\hline
				IAPTT-GM & $\mathbf{90.88}$ & $\mathbf{91.57}$& $\mathbf{83.67}$ & $\mathbf{90.37}$ \\
				\hline
			\end{tabular}
		}
	\end{minipage}
	\vspace{-0.4cm}
\end{wrapfigure}

Note that existing methods consider convex a single fully connected layer as the LL model, while more complex neural network structure is not applicable. Under our assumption without LLC, we employ two fully connected layers as the LL network architecture.

In Table~\ref{tab:hyper cleaning}, we compare IAPTT-GM with IG methods (e.g., LS, NS) and EG methods (e.g., RHG, T-RHG~\cite{shaban2019truncated}). As it is shown, IAPTT-GM achieves better test performance of both accuracy and F1 score on two datasets, including  MNIST~\cite{lecun1998gradient} and FashionMNIST~\cite{xiao2017fashion}. Our theoretical results also show that the performance improvement  comes from PTT and IA techniques to overcome non-convex LL subproblems.

\subsection{Evaluations of IA-GM (A) for BLOs under LLC}
\label{sec4.3}
In addition to non-convex BLO problems, we also raise concerns about the acceleration strategy of our method with LLC condition. We first consider the following BLO with LLC condition~\cite{liu2020generic}:
\begin{equation}
	\begin{aligned}\label{eq:ce-2}
	\min\limits_{\x\in \X}\| \x - \y_2 \|^4+\|\y_1-\mathbf{e}\|^4,\quad s.t. \quad (\y_1,\y_2) \in\arg\min\limits_{\y_1\in\mathbb{R}^{n},\y_2\in\mathbb{R}^{n}}\frac{1}{2}\|\y_1\|^2 - \x^{\top}\y_1,
\end{aligned}
\end{equation}

where $n=50$,$\quad \X=[-100,100] \times \cdots[-100,100] \subset \mathbb{R}^{n}$, and $\mathbf{e}$ represents the vector whose elements are all equal to $1$.  The optimal solution for this problem is $\x^{*} = \mathbf{e}, \y_1^{*} = \mathbf{e},  \y_2^{*} = \mathbf{e}$.

%Practically, existing methods can only apply SGD to the LL optimization process.
As shown in Section~\ref{LLC discussion}, our method IA-GM (A) incorporates Nesterov’s acceleration strategy for solving Eq.~\eqref{LLP}. Note that with LLC assumption, IA-GM maintains the convergence property.

\begin{figure*}[htbp]
 \centering
     \begin{subfigure}[t]{0.24\linewidth}
 \includegraphics[height=0.8\linewidth,width=1.0\linewidth]{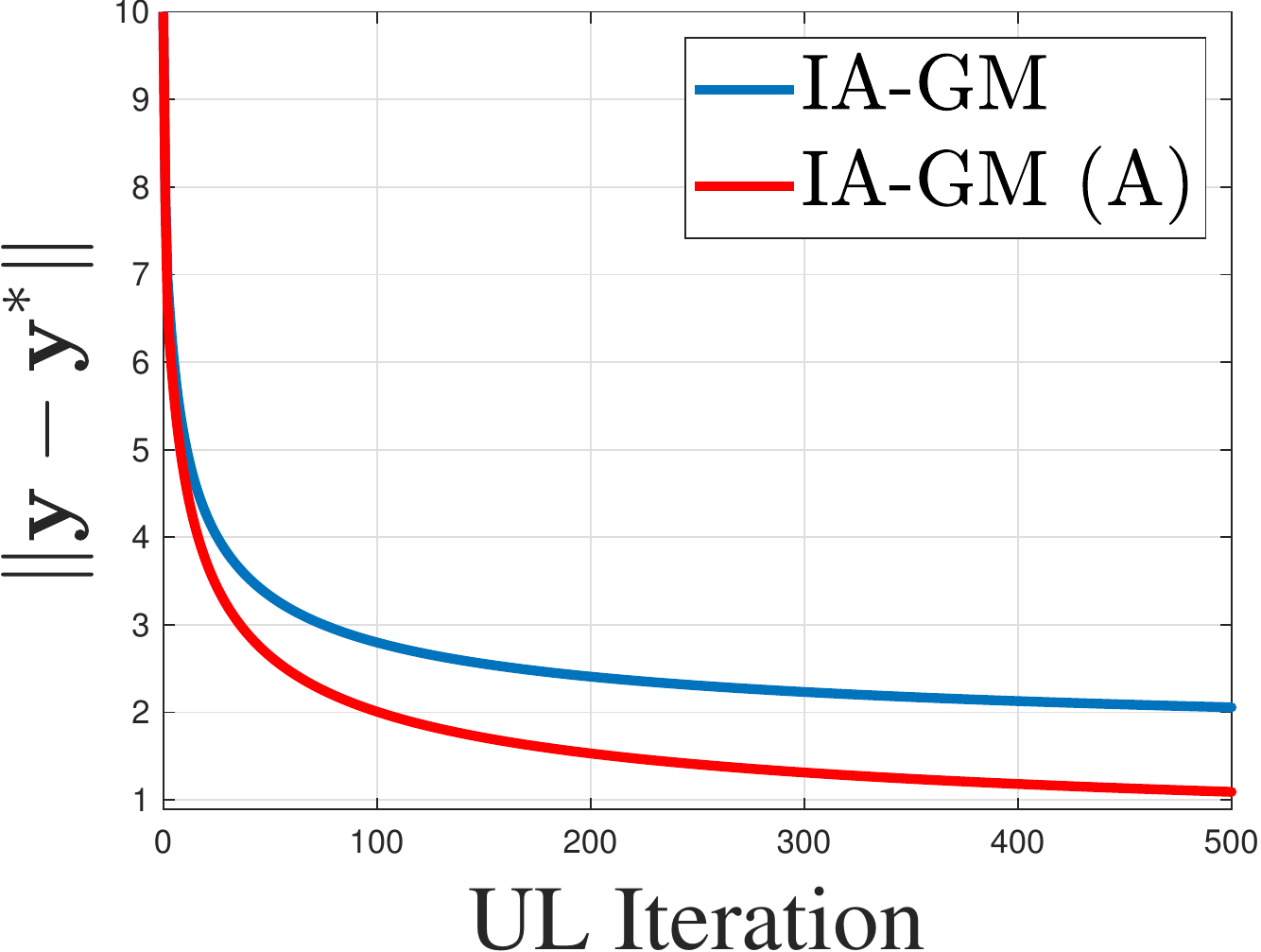}\subcaption{Convex toy regression}
 \end{subfigure}
     \begin{subfigure}[t]{0.24\linewidth}
 \includegraphics[height=0.8\linewidth,width=1.0\linewidth]{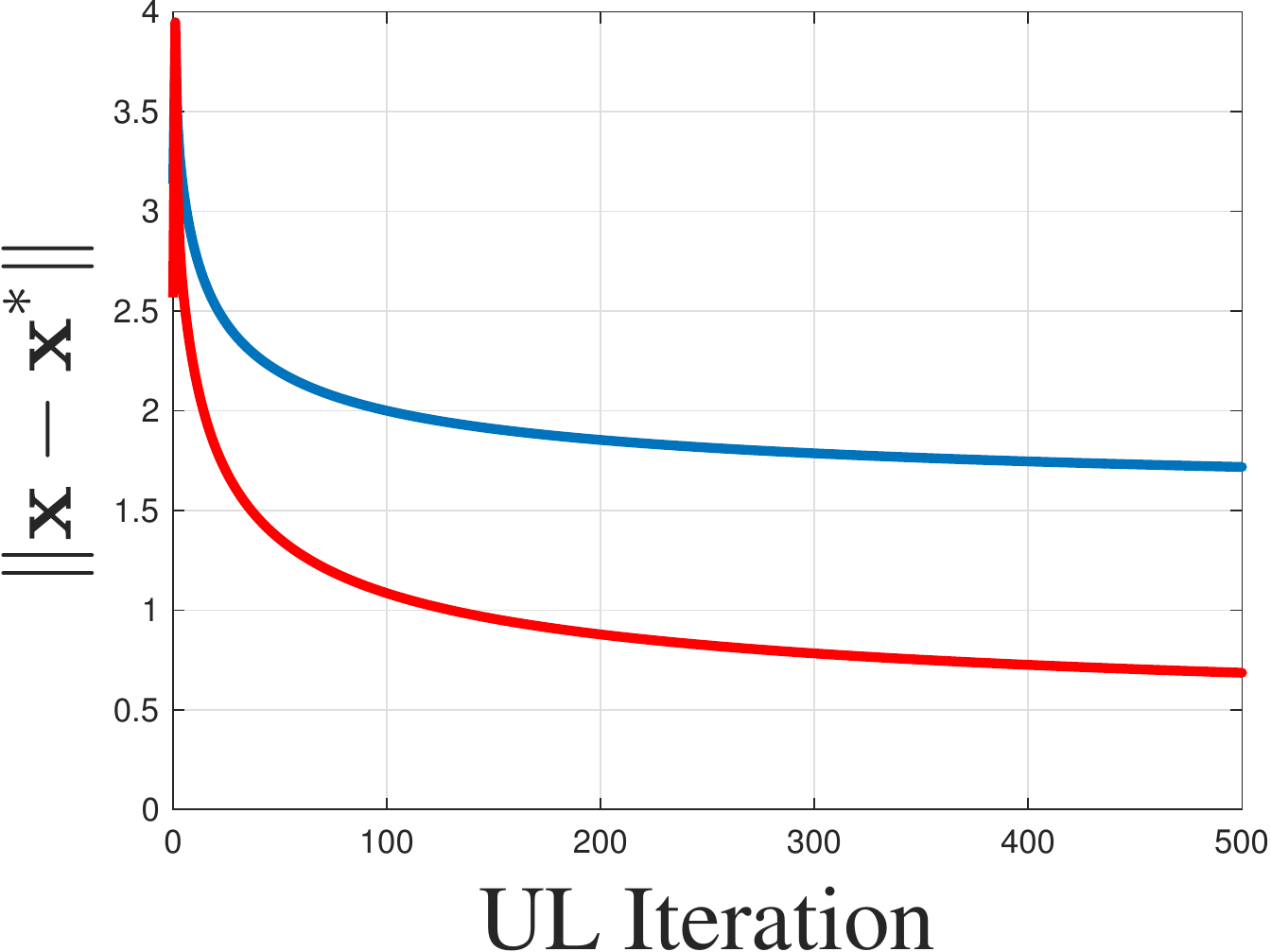}\subcaption{Convex toy regression}
 \end{subfigure}
     \begin{subfigure}[t]{0.24\linewidth}
 \includegraphics[height=0.8\linewidth,width=1.0\linewidth]{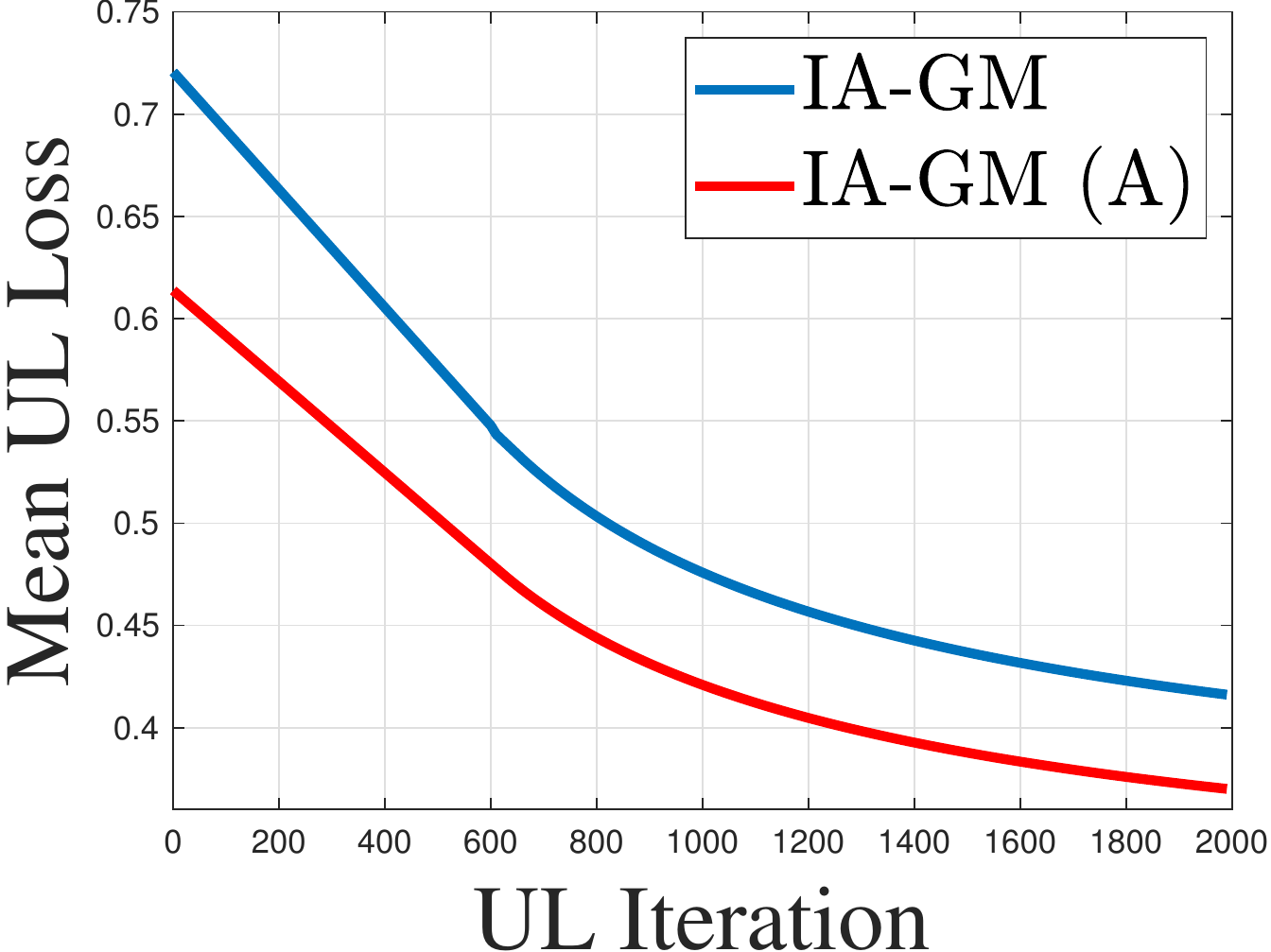}\subcaption{Data hyper-cleaning}
 \end{subfigure}
     \begin{subfigure}[t]{0.24\linewidth}
 \includegraphics[height=0.8\linewidth,width=1.0\linewidth]{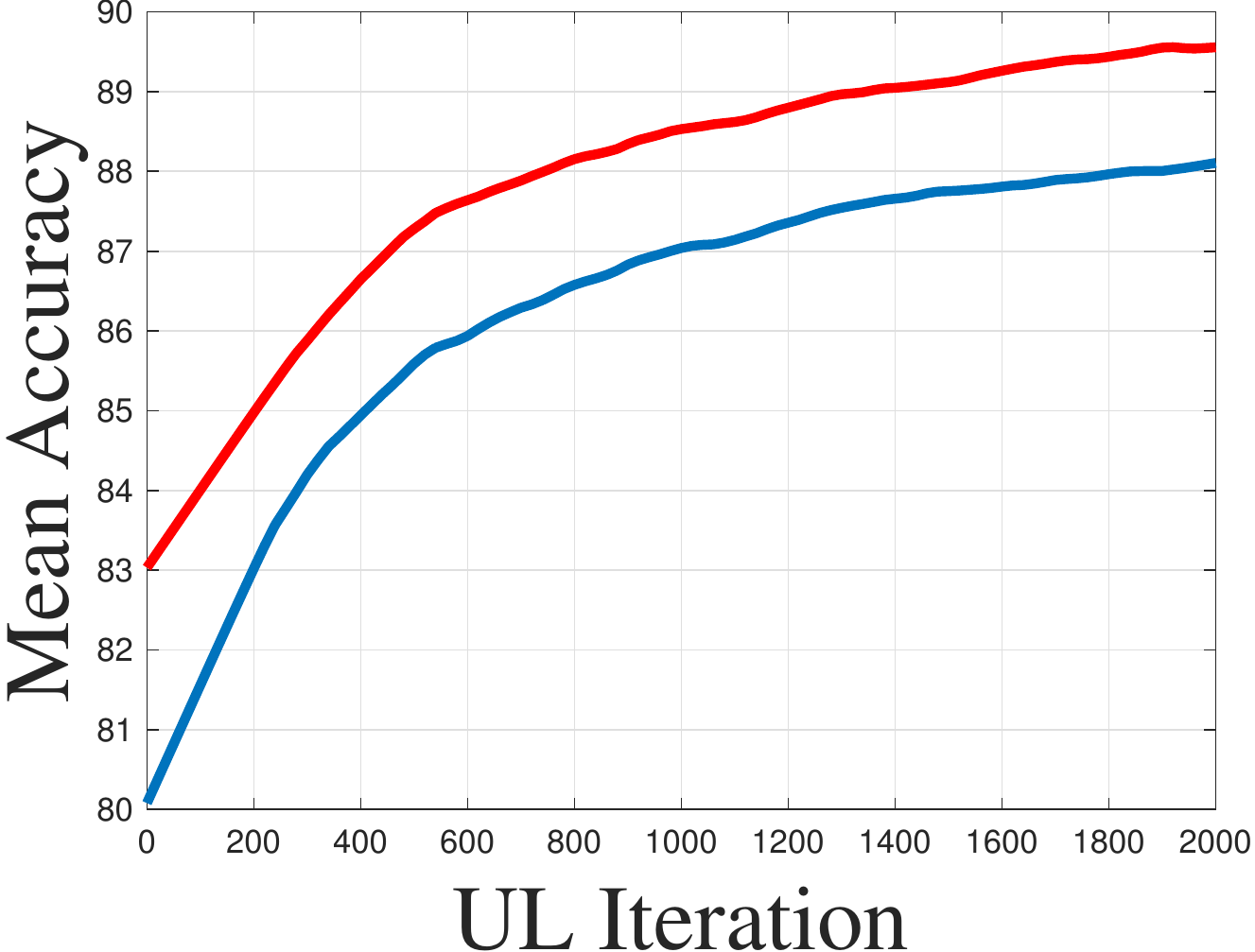}\subcaption{Data hyper-cleaning}
 \end{subfigure}
 \caption{The left two subfigures report the curves of $\|\y - \y^{*}\|$ and $\|\x - \x^{*}\|$ for IA-GM and IA-GM (A). The figures on the right illustrate the results of mean UL loss and accuracy on the convex data hyper-cleaning problems.}
 \label{speed analysis}
\end{figure*}

As illustrated in the first two figures in Figure \ref{speed analysis}, IA-GM (A) shows significant improvement of convergence speed on UL and LL variables, which verifies the convergence results of Theorem \ref{thm2} under LLC.
We further study the convex data hyper-cleaning problem, which simply implements single fully connected layer as the network structure. From the right half of Figure~\ref{speed analysis}, we can easily find that IA-GM (A) also performs better than IA-GM on real-world applications.

\section{Conclusion}
This paper presents a generic first-order algorithmic framework named IAPTT-GM to solve BLO problems with non-convex follower. We introduce two features, initialization auxiliary and pessimistic trajectory truncation operation to guarantee the convergence without the LLC hypothesis and achieves better performance on various applications. Meanwhile, we also validate the  performance and speed improvement for BLO with LLC condition.

\section*{Acknowledgments and Disclosure of Funding}
This work is partially supported by the National Natural Science Foundation of China (Nos. 61922019, 11971220), the National Key R\&D Program of China (2020YFB1313503), LiaoNing Revitalization Talents Program (XLYC1807088), the Shenzhen Science and Technology Program (No. RCYX20200714114700072), the Fundamental Research Funds for the Central Universities, the Pacific Institute for the Mathematical Sciences (PIMS), the Stable Support Plan Program of Shenzhen Natural Science Fund (No. 20200925152128002) and the Guangdong Basic and Applied Basic Research Foundation 2019A1515011152.
%\section*{References}
\bibliography{nips2021_nonconvex_bib}

\clearpage
\begin{appendix}

The supplemental material is organized as follow. Detailed proofs of theoretical results in Section 3.1 and Section 3.2 are provided in Section~\ref{section A} and Section~\ref{section B}, respectively. Section~\ref{section C} presents configurations of computing devices and detailed settings (e.g., data splits, hyper-parameters) of numerical experiments given in Section 4 of the main paper. 
\label{Appendix A}

\section{Proof of results in Section 3.1}
\label{section A}
%\begin{lemma}\label{uniform_con}
%	Let $\{\y_{k}(\x,\z) \}$ be the sequence defined in Eq.~\eqref{yk_def} with $\alpha^k_{\y} \in [\underline{\alpha}_{\y},\overline{\alpha}_{\y}] \subset (0,\frac{2}{L_f})$, there exists $C_f > 0$ such that
%	\begin{equation}
%		\min_{0 \le k \le K} \|  \mathcal{R}_{\underline{\alpha}_{\y}} (\x,\y_{k}(\x,\z)) \| \le \frac{C_f}{\sqrt{K+1}}, \quad \forall \x \in \X, \z \in \Y.
%	\end{equation}
%\end{lemma}
\subsection{Proof of Lemma 3.1}
\begin{proof}%[Proof of Lemma 3.1]
	First, as $\X$ and $\Y$ are compact sets and $f$ is continuous on $\X \times \Y$, there exist constants $m, M$ such that $m \le f(\x,\y) \le M$ for any $(\x, \y) \in \X \times \Y$.
	According to \cite[Lemma 10.4]{FirstOrderMethods}, we have
	\[
	\left(\frac{1}{\alpha_{\y}^k} - \frac{L_f}{2} \right)\| \mathcal{R}_{\alpha_{\y}^k} (\x,\y_{k}(\x,\z))\|^2 \le f(\x,\y_{k}(\x,\z)) - f(\x,\y^{k+1}(\x,\z)), \ \ \forall k \ge 0.
	\]
	Since $\alpha_{\y}^k \in [\underline{\alpha}_{\y},\overline{\alpha}_{\y}] \subset (0,\frac{2}{L_f})$, it follows from \cite[Theorem 10.9]{FirstOrderMethods} that $\| \mathcal{R}_{\underline{\alpha}_{\y}} (\x,\y_{k}(\x,\z))\| \le \| \mathcal{R}_{\alpha_{\y}^k} (\x,\y_{k}(\x,\z))\|$, and thus
	\[
	\| \mathcal{R}_{\underline{\alpha}_{\y}} (\x,\y_{k}(\x,\z)) \|^2 \le \frac{1}{\left(1/\overline{\alpha}_{\y} - L_f/2 \right)}\left( f(\x,\y_{k}(\x,\z)) - f(\x,\y^{k+1}(\x,\z)) \right), \ \ \forall k \ge 0.
	\]
	Summing the above inequality from $k = 0$ to $K$, we have
	\[
	\sum_{k = 0}^{K}\| \mathcal{R}_{\underline{\alpha}_{\y}} (\x,\y_{k}(\x,\z))  \|^2 \le \frac{1}{\left(1/\overline{\alpha}_{\y} - L_f/2 \right)} \left( f(\x,\y^{0}(\x,\z)) - f(\x,\y^{K+1}(\x,\z)) \right).
	\]
	Since $\y^{k}(\x,\z) \in \Y$ for any $k$, $m \le  f(\x,\y^{k}(\x,\z)) \le M$ for any $\x \in \X, \z \in \Y$ and $k \ge 0$. Then we can obtain from the above inequality that
	\[
	\min_{0 \le k \le K} \| \mathcal{R}_{\underline{\alpha}_{\y}} (\x,\y_{k}(\x,\z)) \| \le \sqrt{\frac{M - m}{\left( 1/\overline{\alpha}_{\y} - L_f/2 \right) (K+1)}}, \quad \forall \x \in \X, \z \in \Y.
	\]
	The conclusion follows by letting $C_f = \sqrt{\frac{M - m}{ 1/\overline{\alpha}_{\y} - L_f/2 }}$.
\end{proof}

\subsection{Proof of Lemma 3.2}
%\begin{lemma}\label{lem1}
%	Let $(\x_K, \z_K) \in \mathrm{argmin}_{\x \in \X,\z \in \Y } \varphi_K(\x,\z)$, then
%	\[
%	\varphi_K(\x_K, \z_K) \le \inf_{\y \in \hat{\S}(\x)} F(\x,\y), \quad \forall \x \in \X.
%	\]
%\end{lemma}
\begin{proof}%[Proof of Lemma 3.2]
	For any $\x \in \X$, and any $\epsilon > 0$, there exists $\y_{\epsilon} \in \hat{\S}(\x)$ such that $F(\x, \y_{\epsilon})  \le \inf_{\y \in \hat{\S}(\x)} F(\x,\y) + \epsilon$. As $\y_{\epsilon} \in \hat{\S}(x)$, then $\mathcal{R}_{\alpha}(\x,\y_{\epsilon}) = 0$ for any $\alpha > 0$ and thus $\y_{k}(\x,\y_{\epsilon}) = \y_{\epsilon}$ for any $k \ge 0$. Since $\y_{\epsilon} \in \Y$, we have
	\[
	\varphi_K(\x_K, \z_K) \le \varphi_K(\x, \y_{\epsilon}) =  \max_{1 \le k \le K} \left\{ F(\x,\y_k(\x,\y_{\epsilon})) \right\} = F(\x, \y_{\epsilon}) \le \inf_{\y \in \hat{\S}(\x)} F(\x,\y) + \epsilon.
	\]
	The conclusion follows by letting $\epsilon \rightarrow 0$ in above inequality.
\end{proof}

%\begin{lemma}\label{lemma_R}
%	$\mathcal{R}_{\alpha}(\x,\y)$ is continuous, for any $\alpha > 0$.
%\end{lemma}
%\begin{proof}
%	By the definition of $\mathcal{R}_{\alpha}(\x,\y)$, we only need to show the continuity of $\mathcal{T}_{\alpha}(\x,\y) = \mathtt{Proj}_{\Y} \left(\y - \alpha \nabla_\y f(\x,\y) \right)$. First, it follows from \cite[Theorem 6.42]{FirstOrderMethods} that $\mathtt{Proj}_{\Y}$ is continuous. Combined with the assumed continuity of $\nabla_y f(x,y)$, we get the continuity of $\mathcal{T}_{\alpha}(\x,\y)$ immediately and thus get the conclusion.
%\end{proof}

%\begin{thm2}\label{Thm1}
%	Let $\{\y_{k}(\x,\z) \}$ be the sequence generated by \eqref{yk_def} with $\alpha \in [\underline{\alpha}_{\y},\overline{\alpha}_{\y}] \subset (0,\frac{2}{L_f})$,
%	and $\left(\x_{K}, \z_{K}\right) \in \operatorname{argmin}_{\x \in \X, \z \in \Y} \varphi_K(\x, \z)$,
%	then we have:\\
%	(1) Any limit point $\bar{\x}$ of the sequence $\left\{\x_{K}\right\}$ is the solution to BLO \eqref{blo problem}, that is $\bar{\x} \in \operatorname{argmin}_{\x \in \X}\varphi(\x).$\\
%	(2) $\inf_{\x \in \X, \z \in\Y} \varphi_K(\x, \z) \rightarrow \inf _{\x \in \X} \varphi(\x)$ as $K \rightarrow \infty$.
%\end{thm2}
\subsection{Proof of Theorem 3.1}
\begin{proof}%[Proof of Theorem 3.1]
	For any $K > 0$, we define $i(K) := \mathrm{argmin}_{0 \le k \le K} \| \mathcal{R}_{\underline{\alpha}_{\y}}(\x,\y_{k}(\x,\z))\| $.
	For any limit point $\bar{\x}$ of the sequence $\{\x_K\}$, let $\{\x_{l}\}$ be a subsequence of $\{\x_K\}$ such that $\x_{l} \rightarrow \bar{\x} \in \X$.
	As $\{\y_{i(K)}(\x_K, \z_K)\} \subset \Y$ and $\Y$ is compact, we can find a subsequence $\{\x_{j}\}$ of $\{\x_{l}\}$ satisfying $\y_{i(j)}(\x_j,\z_j) \rightarrow \bar{\y}$ for some $\bar{\y} \in \Y$. It follows from Lemma 3.1 that for any $\epsilon > 0$, there exists $J(\epsilon) > 0$ such that for any $j > J(\epsilon)$, we have
	\[
	\|\mathcal{R}_{\underline{\alpha}_{\y}}(\x_j,\y_{i(j)}(\x_j,\z_j))\| \le \epsilon.
	\]
	By letting $j \rightarrow \infty$, and since $\mathcal{R}_{\alpha}(\x,\y)$ is continuous, we have
	\[
	\|\mathcal{R}_{\underline{\alpha}_{\y}}(\bar{\x}, \bar{\y})\| \le \epsilon.
	\]
	As $\epsilon$ is arbitrarily chosen, we have $\|\mathcal{R}_{\underline{\alpha}_{\y}}(\bar{\x}, \bar{\y})\| \le 0$ and thus $\bar{\y} \in \hat{S}(\bar{\x})$.
	
	Next, as $F$ is continuous at $(\bar{\x},\bar{\y})$, for any $\epsilon > 0$, there exists $J(\epsilon) > 0$ such that for any $j > J(\epsilon)$, it holds
	\[
	F(\bar{\x},\bar{\y}) \le F( \x_j,\y_{i(j)}(\x_j,\z_j) ) + \epsilon.
	\]
	We define $\hat{\varphi}(\x) := \inf_{\y \in \hat{\S}(\x) } F(\x,\y)$, then for any $j > J(\epsilon)$ and $\x \in \X$,
	\begin{equation}\label{eq1}
		\begin{aligned}
			\hat{\varphi}(\bar{\x}) &= \inf_{\y \in \hat{\S}(\bar{\x}) } F(\bar{\x}, \y) \\
			&\le F(\bar{\x}, \bar{\y}) \\
			&\le F(\x_j,\y_{i(j)}(\x_j,\z_j)) + \epsilon \\
			&\le \max_{1 \le k \le j} F(\x_j,\y_k (\x_j,\z_j)) + \epsilon \\
			& = \varphi_j(\x_j,\z_j) + \epsilon \\
			&\le \hat{\varphi}(\x) + \epsilon,
		\end{aligned}
	\end{equation}
	where the lase inequality follows from Lemma 3.2. By taking $\epsilon \rightarrow 0$, we have
	\begin{equation*}
		\hat{\varphi}(\bar{\x}) \le  F(\bar{\x}, \bar{\y}) \le \hat{\varphi}(\x), \quad \forall \x \in \X,
	\end{equation*}
	which implies $\bar{\x} \in \arg\min_{\x \in \X} \hat{\varphi}(\x)$ and $(\bar{\x},\bar{\y}) \in \mathrm{argmin}_{\x \in \X, \y \in \Y} F(\x,\y), \ s.t. \ \y \in \hat{\S}(\x)$. By Assumption 3.1(5), we have $\bar{\y} \in \S(\x)$ and thus $\hat{\varphi}(\bar{\x}) \ge \varphi(\bar{\x})$. Next, since $\hat{\S}(\x) \supset \S(\x)$, then $\hat{\varphi}(\x) \le \varphi(\x)$ for any $\x \in \X$. Thus we have $\inf_{\x \in \X} \hat{\varphi}(\x) =  \inf_{\x \in \X} \varphi(\x)$ and $\bar{\x} \in \arg\min_{\x \in \X} \varphi(\x)$.
	
	We next show that $	\inf_{\x \in \X, \z \in \Y}\varphi_K(\x,\z) \rightarrow \inf_{\x \in \X} \hat{\varphi}(\x) = \inf_{\x \in \X} \varphi(\x)$ as $K \rightarrow \infty$.
	According to Lemma 3.2, for any $\x \in \X$,
	\begin{equation*}
		\inf_{\x \in \X, \z \in \Y}\varphi_K(\x,\z) \le \hat{\varphi}(\x),
	\end{equation*}
	by taking $K \rightarrow \infty$, we have
	\begin{equation*}
		\limsup_{K \rightarrow \infty} \left\{ \inf_{\x \in \X, \z \in \Y}\varphi_K(\x,\z) \right\} \le \hat{\varphi}(\x), \qquad \forall x \in X,
	\end{equation*}
	and thus
	\begin{equation*}
		\limsup_{K \rightarrow \infty} \left\{ \inf_{\x \in \X, \z \in \Y}\varphi_K(\x,\z) \right\}\le \inf_{\x \in \X} \hat{\varphi}(\x).
	\end{equation*}
	So, if $\inf_{\x \in \X, \z \in \Y}\varphi_K(\x,\z) \rightarrow \inf_{\x \in \X} \hat{\varphi}(\x) = \inf_{\x \in \X} \varphi(\x)$ does not hold, then there exist $\delta > 0$ and subsequence  $\{(\x_{l},\z_l)\}$ of $\{(\x_K,\z_K)\}$ such that
	\begin{equation}\label{eq:contra_dist1}
		\inf_{\x \in \X, \z \in \Y} \varphi_{l}(\x,\z) = \lim_{l \rightarrow \infty} \varphi_{l}(\x_{l},\z_l)  < \inf_{\x \in \X} \hat{\varphi}(\x) -  \delta, \quad \forall l.
	\end{equation}
	Since $\X$ is compact, we can assume without loss of generality that $\x_{l} \rightarrow \bar{\x}$ for some $\x \in \X$ by considering a subsequence. Then, as shown in above, we have $\bar{\x} \in \arg\min_{\x \in \X} \hat{\varphi}(\x)$. And, by the same arguments for deriving \eqref{eq1}, we can show that for any $\epsilon > 0$, there exists $k(\epsilon) > 0$ such that for any $l > k(\epsilon)$, it holds
	\begin{equation*}
		\hat{\varphi}(\bar{\x}) \le \varphi_l(\x_{l},\z_l) + \epsilon.
	\end{equation*}
	By letting $l \rightarrow \infty$, $\epsilon \rightarrow 0$ and the definition of $\x_l$, we have
	\begin{equation*}
		\inf_{\x \in \X} \hat{\varphi}(\x) = \hat{\varphi}(\bar{\x}) \le \liminf_{l \rightarrow \infty}\left\{ \inf_{\x \in \X, \z \in \Y} \varphi_{l}(\x,\z) \right\},
	\end{equation*}
	which implies a contradiction to \eqref{eq:contra_dist1}. Thus we have $\inf_{\x \in \X, \z \in \Y}\varphi_K(\x,\z) \rightarrow \inf_{\x \in \X} \hat{\varphi}(\x) = \inf_{\x \in \X} \varphi(\x)$ as $K \rightarrow \infty$.
\end{proof}

\subsection{Proof of Theorem 3.2}
\begin{proof}%[Proof of Theorem 3.2]
	By using the same arguments as in the proof of Theorem 3.1, for any limit point $(\bar{\x},\bar{\z})$ of the sequence $\{(\x_K, \z_K)\}$, we can find a subsequence $\{(\x_j,\z_j)\}$ of sequence $\{(\x_K,\z_K)\}$ such that $\x_{j} \rightarrow \bar{\x} \in \X$, $\z_j \rightarrow \bar{\z} \in \Y$ and $\y_{i(j)}(\x_j,\z_j) \rightarrow \bar{\y} \in \Y$ for some $\bar{\y} \in \hat{S}(\bar{\x})$, where  $i(K) := \mathrm{argmin}_{0 \le k \le K} \| \mathcal{R}_{\underline{\alpha}_{\y}}(\x,\y_{k}(\x,\z))\| $.
	
	Next, as $F$ is continuous at $(\bar{\x},\bar{\y})$, for any $\epsilon > 0$, there exists $J(\epsilon) > 0$ such that for any $j > J(\epsilon)$, it holds
	\[
	F(\bar{\x},\bar{\y}) \le F( \x_j,\y_{i(j)}(\x_j,\z_j) ) + \epsilon.
	\]
	Then for any $j > J(\epsilon)$,
	\begin{equation}\label{eq2}
		\begin{aligned}
			F(\bar{\x}, \bar{\y})
			&\le F(\x_j,\y_{i(j)}(\x_j,\z_j)) + \epsilon \\
			&\le \max_{1 \le k \le j} F(\x_j,\y_k (\x_j,\z_j)) + \epsilon \\
			& = \varphi_j(\x_j,\z_j) + \epsilon .
		\end{aligned}
	\end{equation}
	
	Next, as $\left(\x_{j}, \z_{j}\right)$ is a local minimum of $\varphi_{j}(\x,\z)$ with uniform neighborhood modulus $\delta$, it follows
	\[
	\varphi_j(\x_j,\z_j) \le \varphi_j(\x,\z), \quad \forall (\x,\z) \in \mathbb{B}_{\delta} (\x_j, \z_{j})\cap \X \times \Y.
	\]
	Since $\mathbb{B}_{\delta/2} (\bar{\x},\bar{\z}) \subseteq \mathbb{B}_{\delta/2+\|(\x_j,\z_j)- (\bar{\x},\bar{\z})\|}(\bar{\x},\bar{\z}) \subseteq \mathbb{B}_{\delta} (\x_j, \z_{j})$ when $\|(\x_j,\z_j)- (\bar{\x},\bar{\z})\| < \delta/2$, we have that there exists $J(\delta) > 0$ such that whenever $j > J(\delta) $, for any $(\x,\z) \in \mathbb{B}_{\delta/2} (\bar{\x},\bar{\z}) \cap \X \times \Y$, 
	\begin{equation*}\label{eq3}
		\varphi_j(\x_j,\z_j) \le \varphi_j(\x,\z).
	\end{equation*}
	Then, applying the same arguments as in the proof of Lemma 3.2 yields that whenever $j > J(\delta) $,
	\[
	\varphi_j(\x_j,\z_j) \le F(\x,\z),
	\]
	for any $(\x, \z) \in \mathbb{B}_{\tilde{\delta}} (\bar{\x},\bar{\z})\cap \{\x \in \X, \z \in \Y ~|~ \z \in \hat{\S}(\x) \}$ with $\tilde{\delta} = \delta/2$.
	Combining with \eqref{eq2} and taking $j \rightarrow \infty$, $\epsilon \rightarrow 0$ gives the conclusion.
\end{proof}

\section{Proof of results in Section 3.2}
\label{section B}
\begin{lemma}\label{lemma:f_USC}\cite[Lemma 1]{BDA}
	Denote $f^{\ast}(\x):=\min_{\y}f(x,y)$. If $f(\x,\y)$ is continuous on $\X\times\mathbb{R}^m$, then $f^{\ast}(\x)$ is upper semi-continuous on $\X$.
\end{lemma}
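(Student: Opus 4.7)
The plan is to prove upper semi-continuity directly from its sequential characterization, by exploiting the fact that an infimum is bounded above by the value at any fixed feasible point. Since the result is essentially the statement that a pointwise infimum of a family of continuous (hence upper semi-continuous) functions is itself upper semi-continuous, the proof should be short.

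First I would fix an arbitrary $\bar{\x} \in \X$ and an arbitrary sequence $\x_n \to \bar{\x}$ with $\x_n \in \X$, and reduce the goal to showing $\limsup_{n \to \infty} f^{\ast}(\x_n) \le f^{\ast}(\bar{\x})$. Next, for every $\epsilon>0$, I would use the definition of infimum to pick a point $\y_\epsilon \in \mathbb{R}^m$ (depending on $\bar{\x}$ and $\epsilon$) satisfying
\begin{equation*}
f(\bar{\x},\y_\epsilon) \le f^{\ast}(\bar{\x}) + \epsilon.
\end{equation*}
If one prefers, under Assumption 3.1 (where $\Y$ is compact and $f$ is continuous) this $\y_\epsilon$ may be replaced by an exact minimizer and $\epsilon$ set to $0$, but the $\epsilon$-argument keeps the proof robust to the unconstrained reading of the statement.

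Then I would use the trivial bound $f^{\ast}(\x_n) \le f(\x_n, \y_\epsilon)$ for every $n$, and invoke continuity of $f$ in the first argument with $\y_\epsilon$ held fixed to conclude
\begin{equation*}
\limsup_{n \to \infty} f^{\ast}(\x_n) \;\le\; \lim_{n \to \infty} f(\x_n,\y_\epsilon) \;=\; f(\bar{\x},\y_\epsilon) \;\le\; f^{\ast}(\bar{\x}) + \epsilon.
\end{equation*}
Finally I would send $\epsilon \downarrow 0$ to obtain $\limsup_{n \to \infty} f^{\ast}(\x_n) \le f^{\ast}(\bar{\x})$, which by the sequential characterization of upper semi-continuity at $\bar{\x}$ and the arbitrariness of $\bar{\x} \in \X$ yields the lemma.

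There is no real obstacle here: the only potentially subtle point is whether the infimum in the definition of $f^{\ast}$ is attained. Using the $\epsilon$-approximate minimizer cleanly avoids invoking existence of an exact minimizer, so the argument goes through verbatim whether the minimization is over $\mathbb{R}^m$ or over a compact $\Y$. The key conceptual insight being exploited is that continuity gives a uniform upper envelope $f(\cdot,\y_\epsilon)$ that passes to the limit, while the infimum over $\y$ on the left can only make the bound tighter.
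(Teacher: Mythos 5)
Your proof is correct. The paper itself does not prove this lemma --- it simply cites it as Lemma~1 of the BDA reference --- and your argument is the standard one that the cited source uses: bound $f^{\ast}(\x_n)$ from above by $f(\x_n,\y_\epsilon)$ for a fixed $\epsilon$-approximate minimizer $\y_\epsilon$ at $\bar{\x}$, pass to the limit by continuity in $\x$, and send $\epsilon \downarrow 0$. Your use of an $\epsilon$-approximate minimizer rather than an exact one is a sensible precaution (and handles the unconstrained reading of $\min_{\y}$); the only remaining edge case, $f^{\ast}(\bar{\x}) = -\infty$, is excluded in the paper's setting by compactness of $\Y$ and is trivially absorbed by the same argument anyway.
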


\begin{lemma}\label{lem2}
	Assume that $\y_{k}(\x,\z)$ satisfies $\y_{k}(\x,\z) = \z$ for any $\z \in \S(\x)$, $\x \in \X$ and $k \ge 0$.
	Let $(\x_K, \z_K) \in \mathrm{argmin}_{\x \in \X,\z \in \Y } \phi_K(\x,\z):= F(\x,\y_K(\x,\z))$, then
	\[
	\phi_K(\x_K, \z_K) \le \varphi(\x), \quad \forall \x \in \X.
	\]
\end{lemma}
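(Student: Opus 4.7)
The plan is to imitate the proof of Lemma 3.2 almost verbatim, with the stationary-point set $\hat{\S}(\x)$ replaced by the true solution set $\S(\x)$ and the invariance condition $\mathcal{R}_{\alpha}(\x,\y_\epsilon)=0$ replaced by the standing hypothesis that the dynamics leaves $\S(\x)$ invariant. The lemma is purely an approximation-from-above statement, so the strategy is to exhibit, for every $\x \in \X$ and every $\epsilon>0$, a feasible pair $(\x,\z)$ in the outer minimization whose $\phi_K$-value is within $\epsilon$ of $\varphi(\x)$, then invoke the minimality of $(\x_K,\z_K)$ and let $\epsilon \to 0$.

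Concretely, I would fix an arbitrary $\x \in \X$ and $\epsilon>0$. Since $\varphi(\x):=\inf_{\y \in \S(\x)} F(\x,\y)$, by definition of infimum I can pick $\y_\epsilon \in \S(\x)$ with $F(\x,\y_\epsilon) \le \varphi(\x)+\epsilon$. The crux is the assumption that $\y_k(\x,\z)=\z$ whenever $\z \in \S(\x)$, which immediately gives $\y_K(\x,\y_\epsilon)=\y_\epsilon$ and hence
\[
\phi_K(\x,\y_\epsilon)=F(\x,\y_K(\x,\y_\epsilon))=F(\x,\y_\epsilon) \le \varphi(\x)+\epsilon.
\]
Because $\S(\x)=\arg\min_{\y \in \Y} f(\x,\y) \subseteq \Y$, the auxiliary initialization $\y_\epsilon$ is feasible for the outer problem $\min_{\x\in\X,\z\in\Y}\phi_K(\x,\z)$. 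By the optimality of $(\x_K,\z_K)$,
\[
\phi_K(\x_K,\z_K) \le \phi_K(\x,\y_\epsilon) \le \varphi(\x)+\epsilon.
\]
Letting $\epsilon \to 0$ and using that $\x \in \X$ was arbitrary gives the claim.

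There is essentially no obstacle: the whole point of the invariance hypothesis $\y_k(\x,\z)=\z$ for $\z \in \S(\x)$ is precisely to ensure that initializing the dynamics at a true LL minimizer produces a trivial trajectory, which is exactly what makes the above one-line estimate go through. The only mild point to verify is the inclusion $\S(\x)\subseteq\Y$, but this is built into the definition of $\S(\x)$ in Eq.~\eqref{LLP}, so no extra work is needed.
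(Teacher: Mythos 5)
Your proposal is correct and is essentially identical to the paper's own proof: both select an $\epsilon$-minimizer $\y_\epsilon \in \S(\x)$ of $F(\x,\cdot)$, use the invariance hypothesis to get $\y_K(\x,\y_\epsilon)=\y_\epsilon$, compare $\phi_K(\x_K,\z_K)$ with the feasible pair $(\x,\y_\epsilon)$, and let $\epsilon\to 0$. No gaps.
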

\begin{proof}
	For any $\x \in \X$, and any $\epsilon > 0$, there exists $\y_{\epsilon} \in \S(\x)$ such that $F(\x, \y_{\epsilon})  \le \varphi(\x) + \epsilon$. As $\y_{\epsilon} \in S(\x)$, then by assumption that $\y_{k}(\x,\y_{\epsilon}) = \y_{\epsilon}$ for any $k \ge 0$. Since $\y_{\epsilon} \in \Y$, we have
	\[
	\phi_K(\x_K, \z_K) \le \phi_K(\x, \y_{\epsilon}) =  F(\x,\y^k(\x,\y_{\epsilon})) = F(\x, \y_{\epsilon}) \le \varphi(\x) + \epsilon.
	\]
	The conclusion follows by letting $\epsilon \rightarrow 0$ in above inequality.
\end{proof}

%	\begin{thm2}\label{convexThm}
%	Assume that $\y_{k}(\x,\z)$ satisfies: (a) $\y_{k}(\x,\z) \in \Y$, and $\y_{k}(\x,\z) = \z$ for any $\z \in \S(\x)$, $\x \in \X$, $k \ge 0$, and either (b) for any $\epsilon>0$, there exists $k(\epsilon)>0$ such that whenever $K>k(\epsilon)$, $	\sup _{\x \in \X, \z \in \Y}\left\{f\left(\x, \y_{K}(\x, \z)\right)-f^{*}(\x)\right\} \leq \epsilon$, or (v) there exists $\alpha>0$, for any $\epsilon>0$, there exists $k(\epsilon)>0$ such that whenever $K>k(\epsilon)$, $\sup _{\x \in \X, \z \in \Y}\left\|\mathcal{R}_{\alpha}\left(\x, \y_{K}(\x, \z)\right)\right\| \leq \epsilon.
%	$
%	Let $\left(\x_{K}, \z_{K}\right) \in \operatorname{argmin}_{\x \in \X, \z \in \Y} \phi_{K}(\x, \z):= F(\x,\y_K(\x,\z))$, then we have\\
%	(1) any limit point $\bar{x}$ of the sequence $\left\{\x_{K}\right\}$ satisfies that $\bar{\x} \in \arg \min _{\x \in \X} \varphi(\x)$.\\
%	(2) $\inf _{\x \in \X, \z \in \Y} \phi_{K}(\x, \z) \rightarrow \inf _{\x \in \X} \varphi(\x)$ as $K \rightarrow \infty$.
%\end{thm2}
\subsection{Proof of Theorem 3.3}
\begin{proof}
	For any limit point $\bar{\x}$ of the sequence $\{\x_K\}$, let $\{\x_{l}\}$ be a subsequence of $\{\x_K\}$ such that $\x_{l} \rightarrow \bar{\x} \in \X$.
	As $\{\y_K(\x_K,\z_K)\} \subset \Y$ is bounded, we can have a subsequence $\{\x_{j}\}$ of $\{\x_{l}\}$ satisfying $\y_j(\x_j,\z_j) \rightarrow \bar{\y}$ for some $\bar{\y} \in \Y$.
	When the condition (a) holds, for any $\epsilon > 0$, there exists $J(\epsilon) > 0$ such that for any $j > J(\epsilon)$, we have
	\begin{equation*}
		f(\x_j,\y_j(\x_j,\z_j)) - f^{\ast}(\x_j) \le \epsilon.
	\end{equation*}
	By letting $j \rightarrow \infty$, and since $f$ is continuous and $f^{\ast}(x)$ is upper semi-continuous on $\X$ from Lemma \ref{lemma:f_USC}, we have
	\begin{equation*}
		f(\bar{\x},\bar{\y}) - f^{\ast}(\bar{\x}) \le \epsilon.
	\end{equation*}
	As $\epsilon$ is arbitrarily chosen, we have $f(\bar{\x},\bar{\y}) - f^{\ast}(\bar{\x}) \le 0$ and thus $\bar{\y} \in S(\bar{\x})$.
	
	On the other hand, if  $\y_{k}(\x,\z)$ satisfies condition (b). For any $\epsilon > 0$, there exists $J(\epsilon) > 0$ such that for any $j > J(\epsilon)$, we have
	\begin{equation*}
		\| \mathcal{R}_{\alpha}(\x_j,\y_j(\x_j,\z_j)) \| \le \epsilon.
	\end{equation*}
	By letting $j \rightarrow \infty$, and since $\mathcal{R}_{\alpha}$ is continuous, we have
	\begin{equation*}
		\| \mathcal{R}_{\alpha}(\bar{\x},\bar{\y}) \| \le \epsilon.
	\end{equation*}
	As $\epsilon$ is arbitrarily chosen, we have $\| \mathcal{R}_{\alpha}(\bar{\x},\bar{\y}) \| \le 0$ and thus $\bar{\y} \in S(\bar{\x})$.
	
	Next, as $F$ is continuous at $(\bar{\x},\bar{\y})$, for any $\epsilon > 0$, there exists $J(\epsilon) > 0$ such that for any $j > J(\epsilon)$, it holds
	\begin{equation*}
		F(\bar{\x},\bar{\y}) \le F(\x_j,\y_j(\x_j,\z_j)) + \epsilon.
	\end{equation*}
	Then, we have, for any $j > J(\epsilon)$ and $\x \in \X$,
	\begin{equation}\label{thm2_eq1}
		\begin{aligned}
			\varphi(\bar{\x}) &= \inf_{\y \in S(\bar{\x}) } F(\bar{\x}, \y) \\
			&\le F(\bar{\x}, \bar{\y}) \\
			&\le F(\x_j,\y_j(\x_j,\z_j)) + \epsilon \\
			& = \phi_j(\x_j,\z_j) + \epsilon \\
			&\le \varphi(\x) + \epsilon,
		\end{aligned}
	\end{equation}
	where the lase inequality follows from Lemma \ref{lem2}. By taking $\epsilon \rightarrow 0$, we have
	\begin{equation*}
		\varphi(\bar{\x}) \le \varphi(\x), \quad \forall \x \in \X,
	\end{equation*}
	which implies $\bar{\x} \in \arg\min_{\x \in \X} \varphi(\x)$.
	
	We next show that $	\inf_{\x \in \X, \z \in \Y}\phi_K(\x,\z) \rightarrow \inf_{\x \in \X} \varphi(\x)$ as $K \rightarrow \infty$.
	According to Lemma \ref{lem2}, for any $\x \in \X$,
	\begin{equation*}
		\inf_{\x \in \X, \z \in \Y}\phi_K(\x,\z) \le \varphi(\x),
	\end{equation*}
	by taking $K \rightarrow \infty$, we have
	\begin{equation*}
		\limsup_{K \rightarrow \infty} \left\{\inf_{\x \in \X, \z \in \Y}\phi_K(\x,\z) \right\} \le \varphi(\x), \qquad \forall \x \in \X,
	\end{equation*}
	and thus
	\begin{equation*}
		\limsup_{K \rightarrow \infty} \left\{ \inf_{\x \in \X, \z \in \Y}\phi_K(\x,\z) \right\}\le \inf_{\x \in \X} \varphi(\x).
	\end{equation*}
	So, if $\inf_{\x \in \X, \z \in \Y}\phi_K(\x,\z) \rightarrow \inf_{\x \in \X} \varphi(\x)$ does not hold, then there exist $\delta > 0$ and subsequence  $\{(\x_{l},\z_l)\}$ of $\{(\x_K,\z_k)\}$ such that
	\begin{equation}\label{eq:contra_dist}
		\inf_{\x \in \X, \z \in \Y} \phi_{l}(\x,\z) = \lim_{l \rightarrow \infty} \phi_{l}(\x_{l},\z_l)  < \inf_{\x \in \X} \varphi(\x) -  \delta, \quad \forall l.
	\end{equation}
	Since $\X$ is compact, we can assume without loss of generality that $\x_{l} \rightarrow \bar{\x}$ for some $\x \in \X$ by considering a subsequence. Then, as shown in above, we have $\bar{\x} \in \arg\min_{\x \in \X} \varphi(\x)$. And, by the same arguments for deriving \eqref{thm2_eq1}, we can show that for any $\epsilon > 0$, there exists $k(\epsilon) > 0$ such that for any $l > k(\epsilon)$, it holds
	\begin{equation*}
		\varphi(\bar{\x}) \le \phi_l(\x_{l},\z_l) + \epsilon.
	\end{equation*}
	By letting $l \rightarrow \infty$, $\epsilon \rightarrow 0$ and the definition of $\x_l$, we have
	\begin{equation*}
		\inf_{\x \in \X} \varphi(\x) = \varphi(\bar{\x}) \le \liminf_{l \rightarrow \infty}\left\{ \inf_{\x \in \X, \z \in \Y} \phi_{l}(\x,\z) \right\},
	\end{equation*}
	which implies a contradiction to \eqref{eq:contra_dist}. Thus we have $\inf_{\x \in \X, \z \in \Y}\phi_K(\x,\z) \rightarrow \inf_{\x \in \X} \varphi(\x)$ as $K \rightarrow \infty$.
\end{proof}

\subsection{Proof of Theorem 3.4}
\begin{proof}
	According to \cite[Theorem 10.34]{FirstOrderMethods}, when $f(\x,\cdot)$ is convex and $L_f$-smooth for any $\x\in \X$, and $\alpha = \frac{1}{L_f}$, $\{\y_{k}(\x,\z)\}$ admits the following property,
	\[
	f(\x,\y_K(\x,\z)) - f^{\ast}(\x) \le  \frac{2L_f\mathrm{dist}(\y_0(\x,\z),\S(\x))}{(k+1)^2} = \frac{2L_f\mathrm{dist}(\z, \S(\x))}{(k+1)^2},
	\]
	where $\mathrm{dist}(\z,S(\x))$ denotes the distance from $\z$ to the set $S(\x)$.
	Since $\X$ and $\Y$ are both compact sets, then there exists $M > 0$ such that $\mathrm{dist}(\z,\S(\x)) \le M$ for $(\x,\z) \in \X \times \Y$. Then we can easily obtained from the above lemma that $\{y_k(x,z)\}$ satisfies condition (a) in Theorem 3.3. Next, $\y_k(\x,\z) \in \Y$ follows from the update formula of $\y_k$ immediately. And when $\u_0(\x,\z) = \y_0(\x,\z) = \z \in \S(\x)$, it can be easily verified that $\u_k(\x,\z) = \y_k(\x,\z) = \z$ for any $k \ge 0$. Thus $\{\y_{k}(\x,\z)\}$ satisfies all the assumptions required by Theorem 3.3.
\end{proof}

\section{Experiments}
\label{section C}
Our experiments were conducted on a PC with Intel Core i9-10900KF CPU (3.70GHz), 128GB RAM, two NVIDIA GeForce RTX 3090 24GB GPUs, and the platform is 64-bit Ubuntu 18.04.5 LTS.
\subsection{Non-convex Numerical Example}
For the non-convex BLO problem within the text, we follow the parameter settings in Table~\ref{hyper list0}. The EG methods and our IAPTT-GM follow the general setting of hyperparameters, and IG methods follow the instruction of specific hyperparameters.

Note that we adopt SGD optimizer for updating UL variables $\x$ and initialization auxiliary $\z$. $\T$ denotes the inner iterations number for IG methods, e.g., LS and NS. $\mu$ denotes the ratio between UL and LL objectives when aggregating the LL and UL gradients for BDA~\cite{BDA}, $\mu\in\left(0,1\right) $ .

\begin{table}[htbp]
	%\vspace{-0.6cm}
	\centering
	\caption{Values for hyper parameters of nonconvex numerical examples.}
	\label{hyper list0}
	\renewcommand\arraystretch{1.2}
	\setlength{\tabcolsep}{1mm}{
		\begin{tabular}{ll}
			\hline
			General setting & Value \\
			\hline
			Outer loop & 500 \\
			Inner loop & 40 \\
			Learning rate & 0.0005 \\
			Meta learning rate & 0.1 \\
			\hline \hline Specific hyperparameter & Value \\
			\hline
			Inner iteration $\T$& 40 \\
			Ratio $\mu$& $0.4$ \\
			\hline
		\end{tabular}
	}
\end{table}

\subsection{Few-Shot Classification}
\textbf{Datasets}. We choose two well-known benchmarks constructed from the ILSVRC-12 dataset named miniImageNet~\cite{vinyals2016matching} and TieredImageNet~\cite{ren2018meta}. The miniImgaenet consists of 100 selected classes, and each of the class contains 600 downsampled images of size $84 \times 84$. The whole dataset is divided into three disjoint subsets: 64 classes for training, 16 for validation, and 20 for testing. The tieredImageNet is a larger subset with 608 classes, including 779,165 images of the same size in total. These classes are split into 20, 6, 8 categories like miniImageNet, resulting in 351, 97, 160 classes as training, validation, testing set, respectively. Few shot classification task on the tieredImageNet is more challenging due to its dissimilarity between training and testing sets.

\textbf{Network Structures}. We employ the ConvNet-4~\cite{franceschi2018bilevel} and ResNet-12~\cite{oreshkin2018tadam} network structures, which are commonly used in few shot classification tasks. ConvNet-4 is a 4-layer convolutional neural network with $k$ filters followed by batch normalization, non-linearity, and max-pooling operation. ResNet-12 consists of 4 residual blocks followed by 2 convolutional layers, and each block has three repeated groups, including \{$3\times3$ convolution with $k$ filters, batch normalization, activation function\}. Both of the network structures adopt the fully connected layer with softmax function as the baseline classifier.

We adopt %SGD for the LL optimization process,
Adam for updating UL variables $\x$ and initialization auxiliary $\z$ in our method and UL variable $\x$ in other methods for fair comparison.
Related hyperparameters are stated in Table~\ref{tab:few shot list}.
\begin{table}[htbp]
	\centering
	\caption{Values for hyperparameters of few shot classification.}
	\label{tab:few shot list}
	\renewcommand\arraystretch{1.1}
	\setlength{\tabcolsep}{1mm}{
		\begin{tabular}{lll}
			\hline
			General setting & ConvNet-4 &ResNet-12 \\
			\hline
			%&\multicolumn{3}{*}{Hyper parameter} \\
			%\hline
			%\multirow{2}{*}{Notation} & \multicolumn{2}{cc}{Conv-Net} & \multicolumn{2}{cc}{ResNet-12}\\
			%\hline
			Outer loop & 80000 & 80000 \\
			Inner loop & 10 &10\\
			Learning rate & 0.1&0.1 \\
			Meta learning rate & $0.001$&$0.001$ \\
			Meta batch size & 4&2\\
			Hidden size&32&48\\
			Ratio $\mu$& $0.4$&$0.4$ \\
			\hline
		\end{tabular}
	}
\end{table}

%\begin{table}[htbp]
%	\centering
%	\caption{Values for hyper parameters of image enhancement tasks.}
%	\label{tab:derain list}
%	\renewcommand\arraystretch{1.1}
%	\setlength{\tabcolsep}{1mm}{
%  \begin{tabular}{cc|cc}
%\hline
%Hyperparameter & Value &Hyperparameter&Value\\
%\hline
%Epochs & 100 &$\mu_{b}$&-0.1\\
%Stage & 17 &$\mu_{r}$&-0.3\\
%Batch size & 16 &$\lambda_{b}$&0.08\\
%Patch size & 64 &$\lambda_{r}$&-0.15\\
%Channel&3 &Learning rate & 0.001 \\
%Height&64&Loss function& MSE\\
%Width&64&Optimizer&ADMM\\
%\hline
%\end{tabular}
%}
%\end{table}

\subsection{Data Hyper-Cleaning}

We use the subsets of MNIST dataset and more challenging FashionMNIST dataset for training. The MNIST database includes handwritten digits (0 through 9), which is widely used for classification tasks. The FashionMNIST contains different categories of clothing, and serves as a direct drop-in replacement for the original MNIST dataset. The subsets are randomly split to three disjoint subsets, which contain 5000, 5000, 10000 examples, respectively.
%We adopt SGD for the LL optimization process, and all the methods use Adam as the UL optimizer for fair comparison.
We adopt Adam for updating variables $\x$ and $\z$ in our method and UL variables $\x$ in other methods for fair comparison.
The values of hyper parameters are listed in Table \ref{hyper list1}.
\begin{table}[htbp]
	%\vspace{-0.6cm}
	\centering
	\caption{Values for hyperparameters of data hyper-cleaning.}
	\label{hyper list1}
	\renewcommand\arraystretch{1.1}
	\setlength{\tabcolsep}{1mm}{
		\begin{tabular}{ll}
			\hline
			General setting & Value \\
			\hline
			Outer loop & 3000 \\
			Inner loop & 50 \\
			Learning rate & 0.03\\
			Meta learning rate & 0.01 \\
			\hline \hline Specific hyperparameter & Value \\
			\hline
			Inner iteration $\T$ & 50 \\
			Ratio $\mu$ & $0.4$ \\
			\hline
		\end{tabular}
	}
\end{table}

\subsection{Details for Evaluation of IA-GM (A)}
We conduct the acceleration experiments following the parameters setting given in Table~\ref{hyper list}. Note that we adopt %SGD for the LL optimization process,
SGD for updating variables $\x$ and $\z$.

\begin{table}[htbp]
	%\vspace{-0.6cm}
	\centering
	\caption{Values for Hyper parameters of convex numerical examples.}
	\label{hyper list}
	\renewcommand\arraystretch{1.1}
	\setlength{\tabcolsep}{1mm}{
		\begin{tabular}{ll}
			\hline
			General setting & Value \\
			\hline
			Outer loop & 1000 \\
			Inner loop & 20 \\
			Learning rate & $0.15$ \\
			Meta learning rate & 0.005 \\
			\hline
		\end{tabular}
	}
\end{table}

\end{appendix}
\end{document}